\documentclass{article}

\usepackage[accepted]{icml2026}

\usepackage{amsmath}
\usepackage{amsfonts}
\usepackage{bm}
\usepackage{enumitem}
\usepackage{booktabs}
\usepackage{multirow}

\def\eqref#1{equation~\ref{#1}}

\def\1{\bm{1}}

\DeclareMathAlphabet{\mathsfit}{\encodingdefault}{\sfdefault}{m}{sl}
\SetMathAlphabet{\mathsfit}{bold}{\encodingdefault}{\sfdefault}{bx}{n}

\usepackage{microtype}
\usepackage{graphicx}
\usepackage{subcaption}
\usepackage{booktabs}
\usepackage{hyperref}
\usepackage{amsmath}
\usepackage{amssymb}
\usepackage{mathtools}
\usepackage{amsthm}
\usepackage[utf8]{inputenc}
\usepackage[T1]{fontenc}
\usepackage{listings}
\usepackage{caption}
\usepackage{multirow}
\usepackage{makecell}
\usepackage{inlinedef}
\usepackage{url}
\usepackage{amsfonts}
\usepackage{nicefrac}
\usepackage{xcolor}
\usepackage{nicematrix}
\usepackage{tikz}
\usepackage{pifont}
\usepackage{algorithm}
\usepackage{wrapfig}
\usepackage{colortbl}
\usepackage{xspace}
\usepackage{adjustbox}
\usepackage{titletoc}
\usepackage[symbol]{footmisc}
\usepackage[mode=buildnew]{standalone}
\usepackage{longtable}
\usepackage{tcolorbox}
\usepackage{tabularx}
\usepackage[capitalize,noabbrev]{cleveref}
\usepackage{aliascnt}

\usepackage{siunitx}

\theoremstyle{plain}

\newaliascnt{proposition}{theorem}
\newtheorem{proposition}[proposition]{Proposition}
\aliascntresetthe{proposition}
\crefname{proposition}{Proposition}{Propositions}
\Crefname{proposition}{Proposition}{Propositions}
\newaliascnt{lemma}{theorem}
\newtheorem{lemma}[lemma]{Lemma}
\aliascntresetthe{lemma}
\crefname{lemma}{Lemma}{Lemmas}
\Crefname{lemma}{Lemma}{Lemmas}
\newaliascnt{corollary}{theorem}

\aliascntresetthe{corollary}
\crefname{corollary}{Corollary}{Corollaries}

\newcommand{\ourmodel}{\textsc{CoSiNE}}

\newcommand{\beginsupplement}{
        \setcounter{table}{0}
        \renewcommand{\thetable}{S\arabic{table}}
        \setcounter{figure}{0}
        \renewcommand{\thefigure}{S\arabic{figure}}
        \setcounter{algorithm}{0}
        \renewcommand{\thealgorithm}{S\arabic{algorithm}}
     }

\begin{document}

\twocolumn[
  \icmltitle{Conditionally Site-Independent Neural Evolution of Antibody Sequences}

  \icmlsetsymbol{equal}{*}
  \begin{icmlauthorlist}
    \icmlauthor{Stephen Zhewen Lu}{berkeley-eecs,equal}
    \icmlauthor{Aakarsh Vermani}{berkeley-eecs,equal}
    \icmlauthor{Kohei Sanno}{berkeley-eecs}    
    \icmlauthor{Jiarui Lu}{mila,udm-cs}
    \icmlauthor{Frederick A. Matsen IV}{fred-hutch,uw,hwmi}
    \icmlauthor{Milind Jagota}{columbia-sysbio}
    \icmlauthor{Yun S. Song}{berkeley-eecs}
  \end{icmlauthorlist}
  \icmlaffiliation{berkeley-eecs}{University of California, Berkeley}
  \icmlaffiliation{columbia-sysbio}{Columbia University}
  \icmlaffiliation{udm-cs}{Universit{\'e} de Montr{\'e}al}
  \icmlaffiliation{mila}{Mila - Qu{\'e}bec AI Institute}
  \icmlaffiliation{fred-hutch}{Fred Hutchinson Cancer Research Center}
  \icmlaffiliation{uw}{University of Washington}
  \icmlaffiliation{hwmi}{Howard Hughes Medical Institute}

  \icmlcorrespondingauthor{Yun S. Song}{yss@berkeley.edu}

  \icmlkeywords{Machine Learning, ICML}

  \vskip 0.3in
]

\printAffiliationsAndNotice{\icmlEqualContribution}

\begin{abstract}
Common deep learning approaches for antibody engineering focus on modeling the marginal distribution of sequences. By treating sequences as independent samples, however, these methods overlook affinity maturation as a rich and largely untapped source of information about the evolutionary process by which antibodies explore the underlying fitness landscape. In contrast, classical phylogenetic models explicitly represent evolutionary dynamics but lack the expressivity to capture complex epistatic interactions. We bridge this gap with \textbf{\ourmodel{}}, a continuous-time Markov chain  parameterized by a deep neural network. Mathematically, we prove that \ourmodel{} provides a first-order approximation to the intractable sequential point mutation process, capturing epistatic effects with an error bound that is quadratic in branch length. Empirically, \ourmodel{} outperforms state-of-the-art language models in zero-shot variant effect prediction by explicitly disentangling selection from context-dependent somatic hypermutation. Finally, we introduce \textit{Guided Gillespie}, a classifier-guided sampling scheme that steers \ourmodel{} at inference time, enabling efficient optimization of antibody binding affinity toward specific antigens.
\end{abstract}

\section{Introduction}

Antibodies are key effectors of the adaptive immune response, enabling humans and other vertebrates to recognize and neutralize an enormous diversity of molecular targets (antigens). This diversity is made possible by an accelerated evolutionary process operating within individuals, known as \textit{affinity maturation}. During an immune response, B cells evolve in germinal centers, where their antibody genes undergo rapid somatic hypermutation (SHM) and are subsequently subjected to selection that favors variants with improved antigen binding. Repeated cycles of mutation and selection give rise to clonal trees of antibody variants, in which strongly binding lineages expand and diversify, while weak binders are progressively eliminated.

Affinity maturation thus represents a rare example of evolution that is both rapid and experimentally observable, making it an attractive system for studying how mutation and selection jointly shape protein function. To observe the results of this process, many studies have leveraged high-throughput sequencing to generate large-scale profiling of antibody repertoires from peripheral blood, yielding hundreds of thousands of antibody sequences per individual across diverse immune contexts. These data, rich in mature antibody sequences, have been used to train a plethora of antibody sequence models~\citep{Graves2020ARO,Ruffolo2021DecipheringAA,Leem2021DecipheringTL,Olsen2022AbLangAA,Hie2023EfficientEO,Kenlay2024LargeSP,Olsen2024AddressingTA,Wang2025SupervisedFO,pmlr-v311-im25a,Kim2026ExplicitRO}. While such models have demonstrated impressive performance in capturing epistatic constraints within antibody sequences, they fundamentally lack the ability to model the time-dependent \textit{process} over which antibodies mature. 
In particular, general antibody language models are trained under the assumption that the observed sequences in the clonal tree are \textit{independent and identically distributed (i.i.d.)} samples from a stationary distribution. In reality, however, antibody sequences emerge through a structured evolutionary process as mutated descendants of specific germline progenitors. As a result, the performance of these models may in part stem from memorization of conserved germline residues rather than from a true understanding of the complex \textit{process} of affinity maturation itself~\citep{Ng2024FocusedLB,Olsen2024AddressingTA}.

\begin{figure*}[h!]
    \centering
    \includegraphics[trim = 0mm 2mm 2mm 2mm, clip, width=0.98\linewidth]{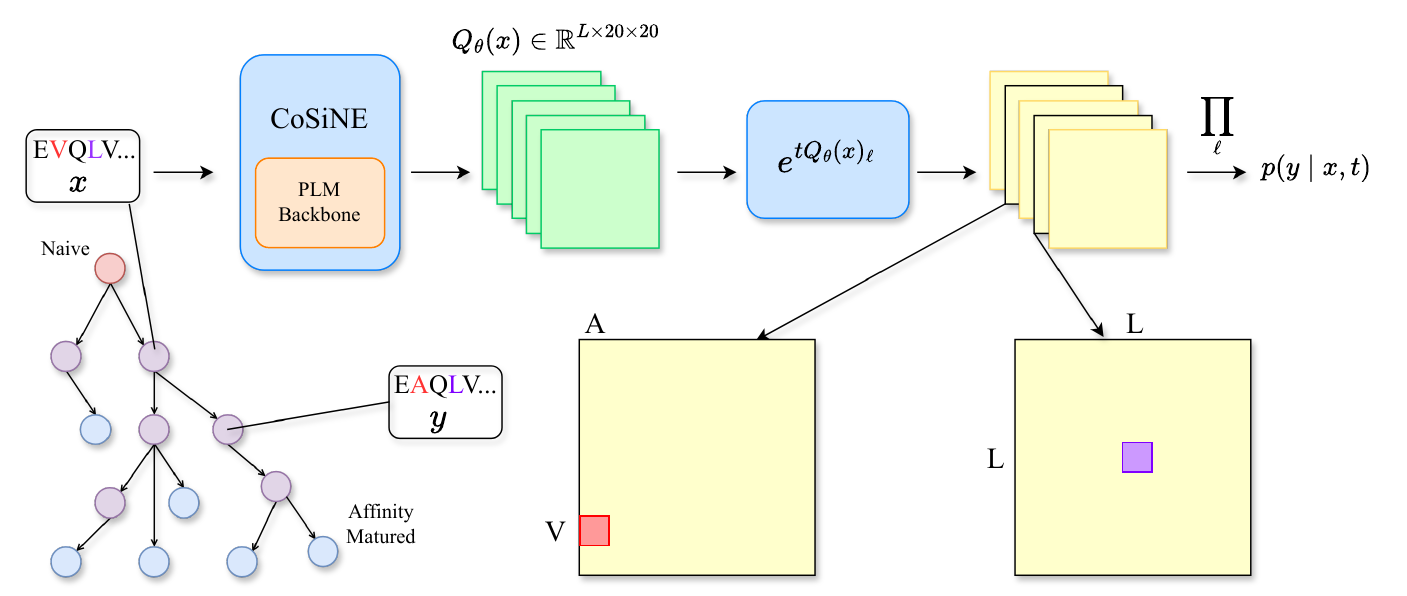}
    \caption{\textbf{Overview of \ourmodel{}}. Given an antibody sequence $x$, the neural network outputs site-specific rate matrices conditioned on the full sequence. Each matrix is evolved for duration $t$ to yield a per-site transition distribution $p(y_l\mid x, t)$. Assuming conditional independence, we take the product of the per-site transition probabilities to yield the full sequence transition probability $p(y\mid x, t)$.}
    \label{fig:cosine_overview}
\end{figure*}

Developing principled and expressive models of sequence evolution that capture these dynamics remains a challenging task.  From a theoretical standpoint, evolutionary processes are typically modeled using continuous-time Markov chains, but the immense state space of protein sequences renders direct modeling intractable. To mitigate this computational bottleneck, classical models of sequence evolution assume that sites evolve independently according to a context-agnostic substitution process. While this assumption enables efficient likelihood computation and has underpinned decades of progress in phylogenetics, it comes at the cost of ignoring epistatic interactions between sites. As a consequence, these models have limited expressivity and often produce unrealistic evolutionary trajectories. This limitation has restricted their applicability in antibody sequence design and optimization, especially in comparison to modern antibody language models, which implicitly capture complex intra-sequence dependencies but lack explicit evolutionary grounding.

In this work, we aim to combine the strengths of these two paradigms by introducing a \underline{\textbf{co}}nditionally \underline{\textbf{s}}ite-\underline{\textbf{i}}ndependent \underline{\textbf{n}}eural \underline{\textbf{e}}volution model (\ourmodel{}) that learns to simulate antibody affinity maturation while capturing epistatic interactions within the sequence (\cref{fig:cosine_overview}). \ourmodel{} uses a neural network to parameterize site-specific rate matrices conditioned on the full sequence context, enabling a factorized transition likelihood that still captures dependencies among sites. In \cref{sec:background}, we formalize the affinity maturation generative process and describe foundational CTMC theory for sequence evolution models. In \cref{sec:related-work}, we review related work. In \cref{sec:cosine-method}, we introduce the \ourmodel{} model and prove that it can learn epistatic effects by approximating an expressive sequential point mutation process. We fit \ourmodel{} to about a hundred thousand clonal trees and employ a principled Gillespie sampling algorithm to simulate clonal expansion. In \cref{sec:vep-method}, we describe how to disentangle the effects of selection and mutation on the affinity maturation process so that \ourmodel{} can be used to infer antibody fitness. In \cref{sec:guidance-method}, we propose a classifier guidance procedure to steer the functional properties of antibodies generated by \ourmodel{}.  In \cref{sec:experiments}, we demonstrate the effectiveness of our approach by outperforming existing models on zero-shot variant effect prediction and \textit{in silico} antibody optimization tasks. Finally, in \cref{sec:conclusion}, we discuss limitations of the current framework and outline directions for future work.

\section{Background}
\label{sec:background}

We consider the evolutionary history of antibody families, where each family $f$ is represented by a clonal tree $T_f$. At the root of $T_f$ is a naive antibody corresponding to an unmutated sequence derived from genetic recombination. Over time, this naive sequence accumulates somatic mutations, which are recorded in the tree as directed edges $(x,y)$ with an associated branch length $t\in\mathbb{R}^+$. Informally, we call $\tau=(x,y,t)$ an evolutionary transition from the \textit{parent} $x$ to the \textit{child} $y$ over \textit{time} $t$. Following the standard in phylogenetics, we assume that $t$ is calibrated to the expected number of mutations per-site between $x$ and $y$. 

\subsection{Markov Models of Protein Evolution}

Protein sequence evolution is most commonly modeled via continuous-time Markov chains (CTMCs). In general, given a discrete state space $\mathcal{S}$, a CTMC is completely defined by an initial distribution $p_0$ and a rate matrix $\mathbf{Q}\in\mathbb{R}^{|\mathcal{S}|\times|\mathcal{S}|}$. In fact, $\mathbf{Q}$ is all we need to compute the transition probability
\begin{equation}\label{eq:general-transition-likelihood}
    p(y\mid x,t):=\exp(t\mathbf{Q})_{xy}
\end{equation}
for maximum-likelihood estimation of $\mathbf{Q}$ as well as for most interesting sampling tasks. Unfortunately, tractability is limited by the matrix exponential $\exp(t\mathbf{Q})$, which costs $O(|\mathcal{S}|^3)$ time. For protein sequences of length $L$ with the typical 20 amino acid vocabulary $\mathcal{A}$, this step is problematic for even $L>2$ since $|\mathcal{S}| = |\mathcal{A}|^L = 20^L$.

\subsection{Classical ``Independent-Sites'' Models}
To circumvent this issue, classical models of protein evolution make the independent-sites assumption to enable the factorization of the full sequence Markov chain into $L$ independent chains, one for each site of the protein. Thus, instead of inferring a sequence-level transition matrix $\mathbf{Q}\in \mathbb{R}^{|\mathcal{A}|^L\times|\mathcal{A}|^L}$, classical models infer $Q_\ell\in\mathbb{R}^{|\mathcal{A}|\times|\mathcal{A}|}$ for $\ell\in\{1,\dots,L\}$. The resulting transition probability is now tractable with a time complexity of $O(L\lvert \mathcal{A} \rvert^3)$:
$$
    p(y\mid x,t):=\prod_{\ell=1}^L \exp(tQ_\ell)_{x_\ell,y_\ell}
$$
However, the independent-sites assumption leads to model misspecification because it fully ignores higher-order epistatic effects between sites. In fact, popular classical models like WAG~\cite{Whelan2001AGE} and LG~\cite{Le2008AnIG} learn a single $Q\in\mathbb{R}^{|\mathcal{A}|\times|\mathcal{A}|}$ shared by all $L$ sites, with an optional scaling factor $Q_\ell=\alpha_\ell Q$ to support site rate variation. As one can imagine, these models are under-expressive and fail to sample high quality proteins over long branch lengths. As such, their application in protein design and optimization has been limited.

\subsection{Sequential Point Mutation Process}

To capture epistatic effects, the class of Markov chains used to model protein evolution is commonly assumed to be a sequential point mutation process~\cite{Pedersen2001ADM,Gesell2006InSS}. In this regime, we assume that mutations in the sequence are always introduced one at a time, prohibiting simultaneous events at multiple sites. This introduces a sparse structure to the rate matrix since its entries $\mathbf{Q}_{xy}$ can only be non-zero if $x$ and $y$ differ by a Hamming distance of 1. While this model ignores insertions and deletions (indels), it is a standard assumption in protein evolution that is particularly well-suited for antibodies. This is because somatic hypermutation is biologically driven by single-nucleotide substitutions~\cite{DiNoia2007MolecularMO}, while indels are rare and typically filtered out by purifying selection to preserve antibody structure~\cite{Wu2003ImmunoglobulinSH}. We denote the space of such matrices by $\mathbf{Q} \in \mathbb{R}^{|\mathcal{A}|^L \times |\mathcal{A}|^L}$.
\section{Related Work}
\label{sec:related-work}

To address the lack of site heterogeneity in classical models, \citet{Le2012ModelingPE} developed a mixture model that partitions sites into 4 rate categories and estimates a different substitution matrix for each partition. In the extreme case, \citet{Prillo2024UltrafastCP} estimated a separate rate matrix for each site. Their model, SiteRM, achieved strong variant effect prediction results, outperforming many protein language models on the ProteinGym benchmark~\cite{Notin2023ProteinGymLB}.

To move beyond the independent-sites assumption, a common approach consists of conditioning the parameterization of site specific mutation rates on the surrounding sequence context. This idea was first explored by \citet{Felsenstein1996AHM} who used a Hidden Markov Model to assign per-site rate categories depending on adjacent sites. \citet{Brard2008SolvableMO} proposed a neighbor-dependent DNA substitution model that adds extra terms to a baseline rate matrix in the presence of CpG sites. More recently, \citet{Albors2025APA} used a neural network that inputs an entire DNA sequence and outputs the parameters of an F81 model \cite{Felsenstein2005EvolutionaryTF} for the central position of this sequence. Alternatively, \citet{Koehl2026DeepMO} abandons the Markov chain formulation entirely and models evolutionary transitions as a machine translation task.

In the domain of antibodies, a separate line of work seeks to mitigate germline bias in protein language models through specialized training objectives or architectural modifications. \citet{olsen2024addressing} proposed a focal loss that emphasizes mutated sites. \citet{pmlr-v311-im25a} introduced a mutation-aware vocabulary that distinguishes between germline and mutated residues. Concurrent with our work, \citet{Kim2026ExplicitRO} developed PRISM, which trains an auxiliary head to predict germline vs. non-germline status at each site, conditioning the language modeling head on this prediction. While these methods reduce germline bias, they collapse evolution to a binary germline-vs-mutated distinction injected through auxiliary losses, vocabularies, or prediction heads. A CTMC formulation models the underlying continuous-time process directly, with the germline-vs-mutated distinction emerging as a natural byproduct.

Most relevant to our work is the Deep Amino acid Selection Model (DASM)~\cite{matsen2025separating}, a mutation-selection model that disentangles somatic hypermutation (SHM) bias from functional selection. Trained on evolutionary transitions, DASM factorizes each codon transition likelihood into a fixed neutral SHM model and a learned amino-acid selection factor. However, the DASM transition likelihood requires a manual clamping of selection scores to remain a valid probability distribution. In contrast, \ourmodel{} derives a selection score through a log-likelihood ratio between a neural CTMC and a pre-trained SHM model~\cite{sung2025thrifty}. This formulation ensures mathematical consistency without heuristic constraints, enabling \ourmodel{} to model the affinity maturation process with greater accuracy. We provide a detailed discussion of our model against DASM in \cref{apx:dasm-vs-cosine}.

\section{\ourmodel{}: \underline{Co}nditionally \underline{S}ite-\underline{I}ndependent \underline{N}eural \underline{E}volution Model}\label{sec:cosine-method}

To address the limitations of classical models, we developed \ourmodel{}, which introduces two key modifications that mitigate model misspecification and leverage high-dimensional neural parameterization. First, we decouple rate estimation by learning site-specific rate matrices $Q_{\ell}$ for each site $\ell$ instead of a single unified matrix with scaling factors. Second, we model each evolutionary transition $\tau=(x,y,t)$ with its own set of rate matrices $\{Q^{(\tau)}_\ell\}_{\ell=1}^L$, which are inferred by conditioning on the parent sequence $x$. Concretely, the transition probability of \ourmodel{} is given by
\begin{equation}\label{eq:cosine-transition-prob}
    p_\theta(y\mid x,t) =\prod_{\ell=1}^L\exp\left(tQ_{\theta}(x)_\ell\right)_{x_\ell,y_\ell}
\end{equation}
where $Q_{\theta}:\mathbb{R}^{|\mathcal{A}|^L}\to \mathbb{R}^{L\times |\mathcal{A}| \times |\mathcal{A}|}$ is a function parameterized by a neural network. Intuitively, \ourmodel{} relaxes the standard independent-sites assumption by estimating state-conditional rates, which enables the model to learn epistatic effects. In the next section, we provide theoretical grounding for \ourmodel{} by showing that our model constitutes a first-order approximation of a sequential point mutation process over the full sequence space. We further justify this framework by establishing an upper bound for the error between the transition probability of our model and the transition probability of the point mutation process $P(y\mid x,t)=\exp(t\mathbf{Q})_{xy}$. Finally, we propose to sample from \ourmodel{} using a Gillespie procedure that provably samples from $P(y\mid x,t)$ under certain conditions.

\paragraph{\ourmodel{} is a First-Order Approximation of $\mathbf{Q}$.}

Let $\mathbf{Q}\in\mathbb{R}^{|\mathcal{A}|^L\times|\mathcal{A}|^L}$ be the generator rate matrix of a sequential point mutation process over protein sequences of length $L$ with amino acid vocabulary $\mathcal{A}$. The vector of transition probabilities to other states after some time $t$ is given by indexing the transition matrix as follows:
$$P(\cdot\mid x, t) = \left(e^{t\mathbf{Q}}\right)_{x,\cdot} \in \mathbb{R}^{|\mathcal{A}|^L}$$

\begin{proposition}{(\textbf{Proof in \cref{apx:prop1-proof}})} Assume the per-site rate matrices $Q_\theta(x)_\ell$ are parameterized such that $$\left(Q_\theta(x)_\ell\right)_{x_\ell, y_\ell} = \mathbf{Q}_{x,y}$$for all $x,y$ with Hamming distance $d(x, y) = 1$ and $\ell$ is the unique site where $x$ and $y$ differ. Then, the error between the transition probability vectors is bounded such that
\begin{equation}
    \|P(\cdot\mid x, t) - p_\theta(\cdot\mid x, t)\|_1 \leq (\lambda t)^2 = O(t^2)
\end{equation}
where $\lambda=\max_x\{-\mathbf{Q}_{x,x}\}$ is the maximum exit rate of any given state.
\label{sec:prop1-main-text}
\end{proposition}

Intuitively, the theorem shows that our model offers a principled way of capturing first-order mutation dynamics while restricting the approximation error to epistatic effects that grow quadratically in the branch length. Moreover, this approximation is particularly well-suited for affinity maturation because high-frequency clonal selection typically constrains lineages to short branch lengths. In this regime, the first-order evolutionary signal captured by our model $O(\lambda t)$ dominates the quadratic approximation error $O(\lambda^2t^2)$.

\paragraph{Gillespie Sampling with \ourmodel{}.}

Although the factorized transition probability in \cref{eq:cosine-transition-prob} allows for trivial sampling, it incurs an error with respect to the more expressive sequential point mutation process that scales quadratically with the branch length. To address this limitation, we propose a Gillespie sampling algorithm adapted to the instantaneous rates of the \ourmodel{} model. In \cref{sec:lemma1-main-text}, we show that this procedure provably samples from $P(\cdot\mid x,t)$ under certain conditions.

\begin{lemma}{(\textbf{Proof in \cref{apx:lemma1-proof}})}
    Let $x_0,\dots,x_{t_{N-1}},x_{t_{N}}$ be the trajectory of sequences sampled from the Gillespie \cref{algo:gillespie}, using branch length $t$ and starting sequence $x_0$. For all $x\in \{ x_{0},\dots,x_{N-1} \}$, assuming that 
    $$(Q_{\theta}(x)_\ell)_{x_\ell,y_\ell}=\mathbf{Q}_{x,y}$$ 
    holds for all sequences $y$ with Hamming distance $d(x,y)=1$, then $x_{t_N}\sim P(\cdot\mid x_0,t)$.
    \label{sec:lemma1-main-text}
\end{lemma}

The validity of \cref{sec:prop1-main-text} and \cref{sec:lemma1-main-text} relies on $(Q_{\theta}(x)_\ell)_{x_\ell,y_\ell}$ approximating the instantaneous non-zero rates in the point mutation process rate matrix $\mathbf{Q}_{x,y}$. However, because \ourmodel{} is trained with the approximate transition probability in \cref{eq:cosine-transition-prob}, its estimated rates will incur some bias due to the marginalization of unobserved intermediate states on long branches. We acknowledge this theoretical limitation and study its effect thoroughly in \cref{sec:snr-weighted-loss}. In practice, we show that the transition probability error of our model indeed scales quadratically with the branch length and that Gillespie sampling produces final state distributions that match those of the true process much more closely (\cref{apx:synthetic-sampling-error-more,apx:gillespie-vs-mat-exp-antibody}).

\subsection{Inferring Antibody Fitness Landscapes from Affinity Maturation}\label{sec:vep-method}

Beyond accurately simulating affinity maturation, it is of great interest to explicitly learn the fitness of a given antibody sequence to enable the design of antibodies with desirable properties. The fitness of an antibody sequence is the objective that guides the affinity maturation process, which is typically some combination of properties including expressibility, binding affinity for a target epitope, and lack of affinity for self-epitopes. These properties are also desirable for artificially engineered antibodies. 

A common method for determining an antibody model's ability to predict fitness is through zero-shot Variant Effect Prediction (VEP) with Deep Mutational Scanning (DMS) assays. In these assays, a library of mutants is produced from a wildtype antibody sequence, and each mutant is evaluated for a certain property, such as binding affinity for a specific epitope or level of expression. 

For antibody language models that learn the marginal distribution of sequences $p(x)$, the most common approach to evaluating on DMS datasets is to correlate the sequence perplexity from the model with the true fitness of the sequence, under the assumption that lower perplexity (corresponding to higher predicted likelihood) corresponds with higher fitness. \ourmodel{}, however, learns a conditional distribution $p(y|x, t)$, requiring a different approach to DMS evaluation.

\paragraph{Somatic Hypermutation Models.}

Molecular evolution, including somatic evolution processes such as affinity maturation, can be viewed as a two-step process: 
First, mutations are introduced by an underlying mutational mechanism. Second, these mutations are filtered by selection according to the fitness advantages or disadvantages they confer.

It is generally believed that mutations arise independently of natural selection. During antibody affinity maturation, this underlying process is known as somatic hypermutation (SHM). Machine learning models, such as Thrifty~\citep{sung2025thrifty}, have been developed to predict rates of SHM from surrounding context by training on sequences that have undergone frameshift mutations, meaning they are under little or no selection, preventing any confounding.

\paragraph{Disentangling Selection from SHM.}

To estimate the fitness landscape, we treat the underlying SHM process as the baseline. Following the mutation-selection framework of \citet{halpern1998evolutionary}, we decompose $Q_{xy}$, the observed transition rate from sequence $x$ to sequence $y$, as:
\begin{align}
    Q_{xy} = k\, \mu_{xy}\, P_{\text{fix}}(x\to y),
    \label{eq:bh_model}
\end{align}
where $\mu_{xy}$ is the transition rate under SHM, $P_{\text{fix}}(x\to y)$ is the probability of fixation, and $k$ is an arbitrary scalar. 

Using the small $t$ approximation $p_\theta(y\,|\,x, t) = \exp(tQ_\theta)_{xy} \approx t(Q_\theta)_{xy}$ for $x\neq y$, we can manipulate \cref{eq:bh_model} to derive the following \textit{selection score} that we use to evaluate \ourmodel{} on DMS assays:
\begin{align}
\text{Score}(x \to y) &= \log p_\theta(y\mid x, t) - \log q(y\mid x,t) \label{eq:sel_score}\\
&\approx \log \left(P_{\text{fix}}(x\to y)\right) + C,
\notag
\end{align} 
where $q(y\mid x,t)$ is the probability of sequence $x$ mutating to $y$ under Thrifty (more details on how this is calculated in \cref{sec:thrifty}) and $p_\theta(y\mid x, t)$ is the probability of the transition under \ourmodel{}. In \cref{sec:mut-sel}, we rely on standard population genetics theory \citep{kimura1962probability} to show that $P_{\text{fix}}(x\to y)$ is monotonic with respect to $s_{xy}$, the selective advantage of allele $y$ over allele $x$. 

\subsection{Conditional Sequence Optimization via Guided Gillespie Sampling}\label{sec:guidance-method}

Biologists often wish to design antibodies that strongly bind to a target of interest while maintaining expressibility and stability in the human body. While we are able to show that our model learns a very strong prior for the latter properties, there remains a performance gap for target-specific properties such as binding affinity. Indeed, given that the antigens associated with the transitions in our dataset are unknown, it is possible that the selection scores inferred by our model (\cref{eq:sel_score}) do not align with wet lab measurements for binding affinity to an arbitrary target. Instead of fine-tuning \ourmodel{} with additional expensive experimental data, we propose a classifier guidance approach to sample from the desired posterior transition density $p(y\mid x,t,z)$ conditioned on the antigen of interest $z$. 

\citet{nisonoff2025unlockingguidancediscretestatespace} showed that as we take the limit of $t\to 0$, the above posterior transition density is completely defined by the following generator rate matrix:
\begin{equation}
    (\mathbf{Q}^{(\gamma)}_{z})_{x,y} = \left[\frac{p(z\mid y)}{p(z\mid x)}\right]^\gamma\mathbf{Q}_{x,y},
    \label{eq:posterior-rate-matrix}
\end{equation}
where $\gamma\in\mathbb{R}^+$ is a hyperparameter that controls the guidance strength and $p(z\mid y)$ is the marginal likelihood of the antigen $z$ given the selection of antibody $y$. Learning $p(z\mid y)$ for an arbitrary class of high-dimensional antigen sequences is difficult; therefore, we approximate it using a surrogate likelihood for the measured binding affinity between $y$ and $z$. Specifically, we model the probability of the antigen being $z$ as proportional to the probability that the affinity $r$ exceeds a threshold $r_{0}$. Furthermore, we assume that the affinity $r$ between $y$ and $z$ is normally distributed with mean $\mu_{\theta_z}(y)$ and variance $\sigma_{\theta_z}^2(y)$, such that
\begin{equation}
    p(z\mid y) \propto \mathbb{P}(r>r_{0}\mid y;\theta_z) = \Phi\left( \frac{\mu_{\theta_z}(y)-r_0}{\sigma_{\theta_z}(y)} \right),
    \label{eq:norm-cdf-guidance}
\end{equation}
where $\Phi$ denotes the c.d.f. of the standard normal distribution. Instead of fixing a global value for $r_0$, which could lead to vanishing guidance weights in regions where sequences are far from the threshold, we select $r_{0}$ adaptively by setting it to the parent's mean prediction $r_{0}=\mu_{\theta_z}(x)$. Under the assumption of a symmetric predictive distribution (like the Gaussian used here), the denominator $\mathbb{P}(r > \mu_{\theta_z}(x) \mid x, \theta_z)$ is exactly $1/2$. Consequently, \cref{eq:posterior-rate-matrix} becomes
\begin{equation}
(\mathbf{Q}^{(\gamma)}_{z})_{x,y} = \left[2\ \cdot\Phi\left( \frac{\mu_{\theta_z}(y)-\mu_{\theta_z}(x)}{\sigma_{\theta_z}(y)} \right)\right]^\gamma\mathbf{Q}_{x,y}.
\label{eq:guidance-exact}
\end{equation}
In its current form, \cref{eq:guidance-exact} is very expensive to compute since we need to query $\mu_{\theta_z}(y)$ and $\sigma_{\theta_z}(y)$ for all sequences $y$ with Hamming distance $d(x,y)=1$, which corresponds to $L\times(|\mathcal{A}|-1)+1$ calls to the predictor for each $x$. Instead, we employ a first-order Taylor series approximation about $\mu_{\theta_z}(x)$ while assuming a locally constant variance: $\sigma_{\theta_z}(y)\approx\sigma_{\theta_z}(x)$. By linearizing the mean as $\mu_{\theta_z}(y) \approx \mu_{\theta_z}(x) + \nabla_{x}\mu_{\theta_z}(x)^{\top}(y-x)$, we can estimate the change in fitness for all $y$ via a single gradient computation. Substituting these approximations into \cref{eq:guidance-exact} yields the Taylor-approximated guidance (TAG) form:
\begin{equation}
    (\mathbf{Q}^{(\gamma)}_{z})_{x,y} \approx \left[2\cdot \Phi\left(\frac{\nabla_{x}\mu_{\theta_z}(x)^{\top}(y-x)}{\sigma_{\theta_z}(x)}\right)\right]^\gamma\mathbf{Q}_{x,y}.
    \label{eq:tag-guidance}
\end{equation}
We apply TAG to \cref{algo:gillespie} to obtain \textit{Guided Gillespie} sampling and provide the pseudocode in \cref{algo:guided-gillespie}. Although \textit{Guided Gillespie} follows from \citeauthor{nisonoff2025unlockingguidancediscretestatespace}, our formulation is applied to the rate matrix $\mathbf{Q}$ of any sequential point mutation process, which is not constrained by boundary time conditions like the generators of flow matching or discrete diffusion models. Notably, this implies that our predictor does not need to be trained on noisy sequences, enabling the straightforward application of sequence-to-property predictors trained naively on experimental data.

\section{Experiments}
\label{sec:experiments}

\subsection{Fitting \ourmodel{} on a Clonal Tree Dataset}\label{sec:main-cosine-model}

We fit $\ourmodel{}$ on a dataset of $\sim$2 million evolutionary transitions constructed from 5 public sources~\citep{jaffe2022functional, tang2022deep, vergani2017novel, engelbrecht2025germline, rodriguez2023genetic}, with train and test splits that match those of the DASM model~\citep{matsen2025separating}. We initialized $\ourmodel{}$ from the 150M parameter ESM2 checkpoint \citep{esm2} and replaced the language modeling head with a randomly initialized output head that uses the $\texttt{softplus}$ activation function to estimate the off-diagonal rates of $Q_\theta(x)_\ell$. To satisfy the properties of a valid rate matrix, we set the diagonal entries of $Q_{\theta}(x)_\ell$ to the negative sum of their respective rows. We also inserted a chain-break token between the heavy and light sequences of paired antibodies, enabling simultaneous reasoning over both chains. We performed end-to-end training of all parameters using a polynomial decay learning rate schedule and early stopping based on validation loss (Additional details in \cref{apx:training-details}).

We found that \ourmodel{} fits this data remarkably well, achieving a test perplexity of $1.264$ for heavy chain transitions in the \citeauthor{rodriguez2023genetic} dataset (see \cref{apx:training-details} for our perplexity definition). In a head-to-head comparison with DASM+Thrifty, \ourmodel{} fits the test transitions with greater per-site likelihood in $62.3\%$ of cases (\cref{fig:cosine-vs-dasm-test-likelihood}). Interestingly, this improvement is even more significant for long branch lengths ($t>0.25$).

To isolate the contribution of context-conditioning, we also compare against a classical context-independent baseline: a WAG-style exchangeability matrix fit on the same antibody training data. \ourmodel{} achieves lower per-token NLL across all branch lengths, with the gap widening for longer branches (\cref{apx:cosine-vs-wag}).

\begin{figure}[t]
    \centering
    \includegraphics[trim = 2mm 2mm 2mm 0mm, clip, width=\columnwidth]{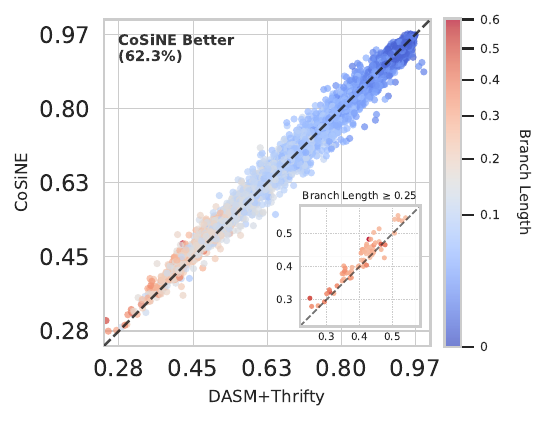}
    \caption{\textbf{Mean per-site likelihood of \ourmodel{} versus DASM+Thrifty on held out evolutionary transitions from the test set}. \ourmodel{} achieves better model fit, especially on transitions with longer branch lengths ($t\ge 0.25$).}
    \label{fig:cosine-vs-dasm-test-likelihood}
\end{figure}

\begin{figure}[t]
    \centering
    \includegraphics[trim = 2mm 8mm 2mm 8mm, clip, width=\linewidth]{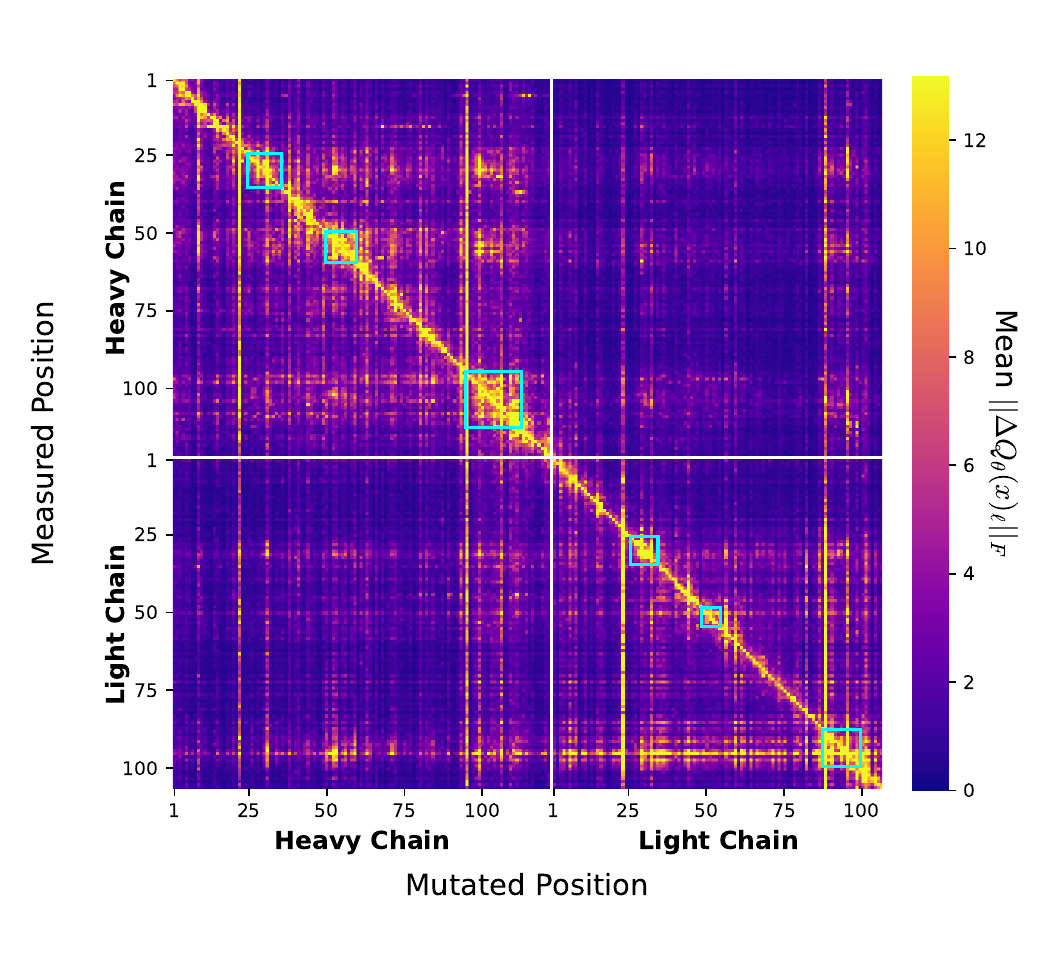}
    
    \caption{\textbf{Categorical Jacobian for antibody 47D11 from CoV-AbDab.} The heatmap displays the sensitivity of the model's output predictions (y-axis) to specific mutations in the input sequence ($x$-axis). Sensitivity is measured as the Frobenius norm of the change in predicted rate matrix, averaged over all possible mutations.
    \textbf{Cyan squares} denote the CDR regions for both chains.}
    \label{fig:categorical_jacobian}
\end{figure}

\subsection{\ourmodel{} Captures Intra- and Inter-Chain Epistasis}

To investigate the epistatic effects learned by \ourmodel{}, we calculate the categorical Jacobian (\cref{fig:categorical_jacobian}), inspired by \citet{zhang2024protein}, for an arbitrary antibody sequence from the CoV-AbDab database~\citep{Raybould2020CoVAbDabTC}. Specifically, for each possible single amino acid substitution in the input sequence, we measure the change in the output at all positions (See \cref{alg:cat_jacobian}). 

The majority of the measured effect is found along the diagonal. This is expected since changes in a given site and its immediate neighbors will likely have a major impact on its rate matrix due to the importance of local interactions. However, there is a considerable amount of off-diagonal activity as well. For example, we see that within the CDR regions, denoted by cyan squares, there is a high degree of off-diagonal coupling. This is biologically plausible, as the CDR regions combine to form the antigen-binding pocket of the antibody, so changes in one CDR residue are often correlated with changes in other CDR residues to preserve binding geometries. In fact, we see significant changes in the predictions at CDR1 and CDR2 caused by mutations in CDR3 within both the heavy and light chains. \ourmodel{} also detects this dependence across chains. For example, when introducing mutations to light chain CDR3, three distinct sensitivity hotspots are induced in the heavy chain associated with the three CDR regions.

\subsection{Zero-Shot Variant Effect Prediction}
\label{sec:vep-results}

We evaluate \ourmodel{'s} performance on DMS assays by measuring the Spearman correlation of our selection score (\cref{eq:sel_score}) with the experimental fitness values. To calculate the score, we set the parent $x$ to be the wildtype sequence from which the mutants in the assay are derived. Rather than tuning $t$ per assay, we fix $t=0.2$ for all assays and VEP experiments.

Evaluation is performed on four DMS assays, two measuring expression and two measuring binding, taken from the FLAb2 benchmark \cite{chungyoun2025fitness}. These assays were selected for their large sample sizes of sequences with the same length in an attempt to limit spurious correlations. As baselines, we compare against AbLang-2, DASM, PRISM, ProGen2 Small, ProGen2 Medium (best performing model for all FLAb2 binding datasets),  ESM-2 150M (best performing model for all FLAb2 expression datasets), and ESM-2 650M~\citep{olsen2024addressing,matsen2025separating,Kim2026ExplicitRO,nijkamp2022progen2exploringboundariesprotein,esm2}. Further details on the DMS assays and evaluation with baseline models are provided in \cref{subsec:dms-info}.

The VEP results are shown in \cref{tab:vep-full-width}, where \ourmodel{} outperforms all other baselines on all datasets except Koenig Expression Light Chain, where it falls short of ProGen2 Small by 0.005. Interestingly, \ourmodel{} substantially outperforms all other models on the Adams dataset, where the wildtype sequence is a mouse antibody. On two additional binding assays from \citet{Petersen2024AnIT}, using distinct antibodies, antigens, and the MAGMA-seq assay technology, \ourmodel{} again outperforms all baselines (\cref{tab:petersen_results}).


We investigate the efficacy of our approach to correcting for SHM on zero-shot VEP performance. As shown in \cref{fig:bar_comparison}, our selection score leads to increased correlation with fitness across all datasets. \cref{fig:combined_correlation} provides an intuitive picture of how the SHM correction helps isolate a selection signal from \ourmodel{}. We also see that the ability to model inter-chain interactions is important. In \cref{sec:single_chain_ablation}, we ablate this ability by passing in just the heavy or light chain alone, finding that the selection score's correlation with fitness drops substantially for some datasets.

Two additional experiments support the robustness of these results. First, \ourmodel{}'s selection score is stable across a wide range of branch lengths, with $\Delta\rho \leq 0.045$ for $t\in [0.1,0.4]$ (\cref{apx:ablation-of-bl}). Second, training \ourmodel{} from scratch without the ESM2 backbone reduces performance by only $\Delta\rho = 0.041$ on average, indicating that the evolutionary training objective, rather than the ESM2 initialization, drives much of the predictive power (\cref{apx:ablation-of-esm2}).

\begin{table*}[t!]
\caption{\textbf{Comparison of deep protein models on VEP benchmarks across expression and binding landscapes as measured by Spearman correlation.} The Koenig assays are split depending on which chain is mutated, denoted by (H) and (L) for the heavy and light chains respectively. To maintain the same sequence length across samples, the Shanehsazzadeh (Shaneh.) assay is split into two datasets with heavy chains of lengths 119 and 120. Best performing models are shown in \textbf{bold}; second-best are \underline{underlined}.}
\label{tab:vep-full-width}
\centering
\begin{small}
\begin{sc}
\begin{tabularx}{\textwidth}{l @{\extracolsep{\fill}} ccc cccc}
\toprule
\multirow{2}{*}{Model} & \multicolumn{3}{c}{Expression} & \multicolumn{4}{c}{Binding} \\
\cmidrule(r){2-4} \cmidrule(l){5-8}
& Koenig (H)\hspace{-1mm} &\hspace{-1mm} Koenig (L) & Adams & Koenig (H)\hspace{-1mm} &\hspace{-1mm} Koenig (L) & Shaneh. (119)\hspace{-1mm} &\hspace{-1mm} Shaneh. (120) \\
\midrule
AbLang-2 & $\phantom{+}0.096$ & $-0.127$ & $-0.097$ & $-0.090$ & $-0.011$ & 0.253 & 0.209 \\
DASM & $\phantom{+}\underline{0.596}$ & $\phantom{+}0.474$ & $\phantom{+}0.270$ & $\phantom{+}\underline{0.415}$ & $\phantom{+}0.327$ & \underline{0.450} & \textbf{0.536} \\
PRISM & $\phantom{+}0.069$ & $\phantom{+}0.129$ & $\phantom{+}\underline{0.297}$ & $\phantom{+}0.005$ & $\phantom{+}0.061$ & 0.348 & 0.286 \\
ESM2-150M & $\phantom{+}0.413$ & $\phantom{+}0.485$ & $-0.112$ & $\phantom{+}0.112$ & $\phantom{+}0.266$ & 0.236 & 0.205 \\
ESM2-650M & $\phantom{+}0.326$ & $\phantom{+}0.429$ & $\phantom{+}0.124$ & $\phantom{+}0.063$ & $\phantom{+}0.265$ & 0.227 & 0.360 \\
ProGen2-Small & $\phantom{+}0.407$ & $\phantom{+}\mathbf{0.513}$ & $-0.024$ & $\phantom{+}0.098$ & $\phantom{+}\underline{0.332}$ & 0.119 & 0.070 \\
ProGen2-Medium & $\phantom{+}0.392$ & $\phantom{+}0.408$ & $\phantom{+}0.231$ & $\phantom{+}0.085$ & $\phantom{+}0.235$ & 0.299 & 0.319 \\
\ourmodel{} (ours) & $\phantom{+}\textbf{0.613}$ & $\phantom{+}\underline{0.508}$ & $\phantom{+}\mathbf{0.464}$ & $\phantom{+}\mathbf{0.456}$ & $\phantom{+}\textbf{0.371}$ & \textbf{0.498} & \textbf{0.536} \\
\bottomrule
\end{tabularx}
\end{sc}
\end{small}
\end{table*}

\begin{figure}[t]
    \centering
    \includegraphics[trim = 2mm 3mm 2mm 2mm, clip, width=\columnwidth]{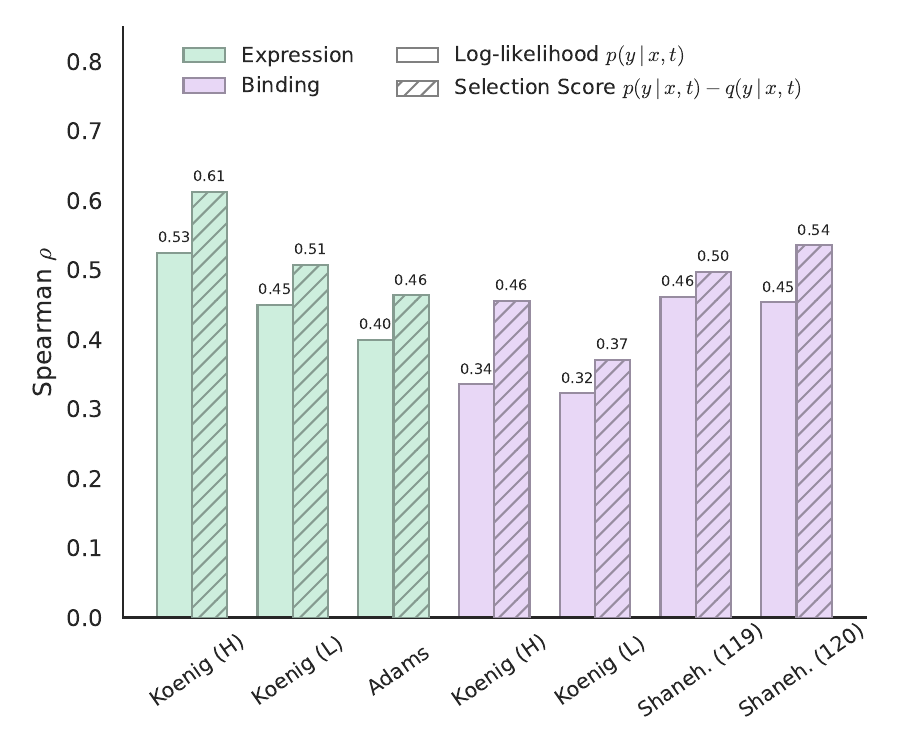}
    \caption{\textbf{DMS evaluation results for \ourmodel{} across expression (green) and binding (purple) assays.} A solid color indicates the log-likelihood, $\log p_\theta(y\mid x,t)$, and hatching indicates the selection score defined in \cref{eq:sel_score}, which utilizes Thrifty likelihoods to separate selection from neutral mutation.}
    \label{fig:bar_comparison}
\end{figure}

\subsection{Guided Affinity Maturation from Naive Antibodies}\label{sec:guidance-from-naive}

We demonstrate the potential of \ourmodel{} to simulate affinity maturation towards high-affinity binders, starting from naive antibody sequences. We used predictors from \citep{jin2021iterative} trained on the CoV-AbDab database, which employ an RNN encoder to transform heavy chain sequences into neutralization scores $\mu_{\theta_z}$ against the SARS-CoV-1 and SARS-CoV-2 receptor binding domains. We adopted MC dropout~\cite{gal2016dropoutbayesianapproximationrepresenting} to estimate $\sigma_{\theta_z}$. From here, we randomly picked naive sequences from the OAS database \citep{olsen2022observed} and recursively sampled down the tree in \cref{fig:test-tree-main-text} using \textit{Guided Gillespie}.

We compare the affinity gain of our generated leaf sequences against 415 SARS-CoV-1 and 766 SARS-CoV-2 binders from the Cov-AbDab database. As shown in \cref{fig:cov-1-delta-binding}, unguided sampling yields a distribution centered near zero for the SARS-CoV-1 target, confirming that the base model is unbiased regarding antigen specificity. Introducing guidance consistently shifts this distribution towards higher affinity. While $\gamma\ge 10$ produced scores exceeding biological binders (likely exploiting oracle uncertainty), we noticed that $\gamma=5$ generates affinity profiles that overlap with the real binders. We therefore adopted $\gamma=5$ for further evaluation. \Cref{fig:cov-1-quality} demonstrates that these guided samples maintain structural plausibility (AbodyBuilder3 pLDDT, \citet{Kenlay2024ABodyBuilder3IA}) and humanness (OASis, \citet{Prihoda2021BioPhiAP}) comparable to both unguided and natural sequences. These results hold across both targets and many seed sequences (\cref{apx:more-aff-mat-guided}), underscoring the generalizability of our approach.

Computationally, our first-order Taylor approximation achieves a 500-fold speedup over exact oracle guidance, with no significant difference in fitness improvements (\cref{apx:tag-vs-exact-guidance-time}). The resulting runtime scales linearly in both sequence length and number of sampling steps (\cref{apx:runtime-scaling-tag}).

\begin{figure}[t]
    \centering
    \includegraphics[trim = 2mm 1.5mm 2mm 1mm, clip, width=\linewidth]{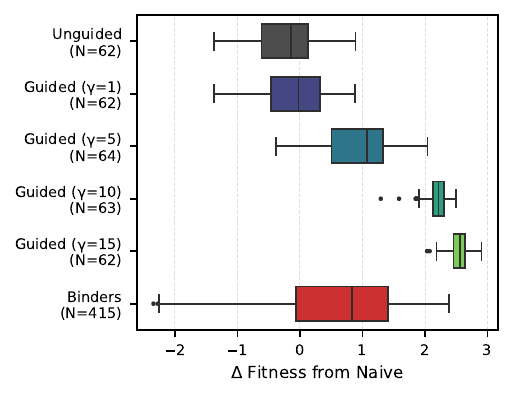}
    \caption{\textbf{\textit{Guided Gillespie} consistently steers the predicted binding affinity of sampled leaf sequences.} We plot the change in predicted binding affinity from the naive root sequence used to start sampling. Known SARS-CoV-1 binders are in red.}
    \label{fig:cov-1-delta-binding}
\end{figure}

\begin{figure}[h]
    \centering
    \includegraphics[trim = 2mm 1mm 2mm 1.2mm, clip, width=\linewidth]{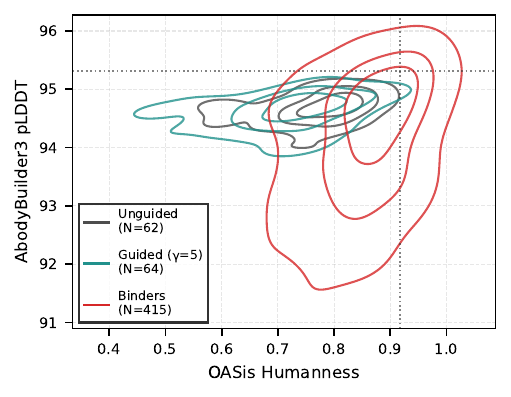}
    \caption{\textbf{Antibodies sampled by \ourmodel{} under guidance ($\gamma=5$) maintain high structural quality (AbodyBuilder3 pLDDT) and humanness (OASis).} We compare against unguided leaf samples (black) and CoV-AbDab binders (red).}
    \label{fig:cov-1-quality}
\end{figure}

\subsection{Local Optimization of Antibody CDRs}\label{sec:local-optimization-cdr}

Beyond long-range affinity maturation, we evaluate whether \ourmodel{} can perform constrained local optimization of antibody CDRs. We tasked each method with refining a SARS-CoV-1 binder under a strict budget of 5 mutations restricted to CDR positions, using the same neutralization predictor as in \cref{sec:guidance-from-naive}. For \ourmodel{}, we fix the number of Guided Gillespie steps to respect this ceiling. All budget-constrained methods are further limited to at most 5 oracle calls per sampled variant; we benchmark \ourmodel{} against a genetic algorithm (GA), a Product-of-Experts (PoE) sampler~\citep{gordon2024generativehumanizationtherapeuticantibodies} applied to ESM-2 and AbLang-1, and an unbounded greedy hill-climbing baseline as a reference upper bound.

Generating 1,000 variants per method, we evaluate across binding affinity ($\Delta\text{Bind}$), sample diversity, and humanness (OASis). \ourmodel{} attains the highest mean and maximum $\Delta\text{Bind}$ among the budget-constrained methods while preserving humanness comparable to the antibody-specialized AbLang-PoE baseline (\cref{tab:abgen-full-width}). The PoE methods obtain slightly higher diversity but lower binding gains. Full baseline configurations are provided in \cref{apx:cdr-optimization-protocol}.

\begin{table*}[ht!]
\caption{\textbf{Constrained CDR optimization of a SARS-CoV-1 binder.} Each variant is limited to 5 mutations within the CDR regions and is generated using at most 5 oracle calls. \textsc{Greedy}* is exempt from the oracle-call budget and serves as a reference upper bound. $E_{\text{dist}}$ reports the mean edit distance from the seed sequence; \textsc{IntDiv} reports the mean pairwise edit distance between generated variants; $N_\text{oracle}$ reports the average number of oracle calls per generated sequence. Best results are \textbf{bolded}; second-best \underline{underlined}.}
\label{tab:abgen-full-width}
\centering
\begin{small}
\begin{sc}
\begin{tabularx}{\textwidth}{l @{\extracolsep{\fill}} ccccccc}
\toprule
Method & Unique $\uparrow$ & $E_{\text{dist}}$ & IntDiv $\uparrow$ & $\text{OASis}$ $\uparrow$ & $\Delta\,\text{Bind}_{\text{mean}}$ $\uparrow$ & $\Delta\,\text{Bind}_{\text{max}}$ $\uparrow$ & $N_\text{oracle}$ $\downarrow$ \\
\midrule
Greedy* & 0.996 & 5.00 & 6.21 & 0.807 & 0.385 & 0.466 & 2756 \\
Genetic Algo. & 0.991 & 4.47 & 7.30 & 0.789 & \underline{0.196} & 0.363 & 4.81 \\
AbLang-1\textit{-PoE} & \underline{0.999} & 4.97 & \underline{7.71} & \textbf{0.849} & 0.167 & \underline{0.390} & 0.58 \\
ESM2\textit{-PoE} & \textbf{1.000} & 4.98 & \textbf{8.47} & 0.802 & 0.149 & 0.355 & 0.58 \\
\ourmodel{} (ours) & 0.988 & 4.56 & 6.83 & \underline{0.822} & \textbf{0.198} & \textbf{0.395} & 5.00 \\
\bottomrule
\end{tabularx}
\end{sc}
\end{small}
\end{table*}
\section{Conclusion and Limitations}
\label{sec:conclusion}

In this work, we presented \ourmodel{}, a method that reconciles the expressivity of deep protein language models with the temporal dynamics of phylogenetic substitution models to effectively model the process of antibody affinity maturation. We introduced a mathematically grounded framework for the inference of sequential point mutation CTMCs via a first-order approximation of the true transition likelihood with bounded error. Empirically, we demonstrated the utility of \ourmodel{} by achieving state-of-the-art results in antibody variant effect prediction. Furthermore, to our knowledge, we are the first to draw an explicit connection between discrete diffusion and classical sequence evolution models, enabling the application of classifier guidance to steer the sampling process of our model. Our results demonstrate that it is possible to capture both time-dependent evolution and epistatic interactions, leading to a new paradigm for protein sequence design that is grounded in molecular evolution.

Despite these contributions, our approach has limitations that present directions for future work. First, our reliance on a first-order approximation of the CTMC over the full sequence state space leads to model misspecification. Second, our framework currently ignores insertions and deletions, restricting \ourmodel{} to equal-length sequences or multiple sequence alignments. Although these limitations are less worrisome for modeling antibody affinity maturation, where evolution is rapid and indels are scarce, they likely present a greater challenge when generalizing to all types of proteins. Nonetheless, we believe \ourmodel{} provides a strong foundation for future research into dynamic and expressive generative models of protein evolution.

\section*{Reproducibility Statement}
We provide implementation and experimental details in \cref{apx:training-details,apx:exp-details}. The theoretical results in the main text are derived or proven in \Cref{apx:proofs}. The source code for \ourmodel{} is made publicly available at \url{https://github.com/thematrixmaster/cosine}, including any datasets we used in this study.

\section*{Acknowledgements}
We would like to acknowledge Antoine Koehl and Kevin Sung for their insightful feedback and help with data pre-processing. This work was supported in part by the UC National Laboratory Fees Research Program of the University of California Office of the President (UC AI Science at Scale Grant L26CR10102), and NIH grants R01-HG013117 (Song) and R01-AI146028 (Matsen). Frederick Matsen is an investigator of the Howard Hughes Medical Institute.

\section*{Impact Statement}
This work advances machine learning by developing principled models for learning and simulating biological sequence evolution. By bridging deep learning with classical evolutionary modeling, our approach contributes new tools for understanding antibody affinity maturation and, more broadly, molecular evolution. Potential positive impacts include improved methods for studying immune responses and supporting applications such as vaccine design and therapeutic antibody development. These advances could contribute to better diagnostics and treatments. We view this work as a methodological contribution to machine learning and computational biology, with societal implications comparable to prior advances in biological sequence modeling.

\clearpage

\bibliography{icml2026}

@article{Le2008AnIG,
  title={An improved general amino acid replacement matrix.},
  author={Si Quang Le and Olivier Gascuel},
  journal={Molecular biology and evolution},
  year={2008},
  volume={25 7},
  pages={1307-20},
  url={https://api.semanticscholar.org/CorpusID:14748147}
}

@article{nguyenIQTREEFastEffective2015,
  title={IQ-TREE: A Fast and Effective Stochastic Algorithm for Estimating Maximum-Likelihood Phylogenies},
  author={Lam-Tung Nguyen and Heiko A. Schmidt and Arndt von Haeseler and Bui Quang Minh},
  journal={Molecular Biology and Evolution},
  year={2014},
  volume={32},
  pages={268 - 274},
  url={https://api.semanticscholar.org/CorpusID:16191489}
}

@article{
esm2,
author = {Zeming Lin  and Halil Akin  and Roshan Rao  and Brian Hie  and Zhongkai Zhu  and Wenting Lu  and Nikita Smetanin  and Robert Verkuil  and Ori Kabeli  and Yaniv Shmueli  and Allan dos Santos Costa  and Maryam Fazel-Zarandi  and Tom Sercu  and Salvatore Candido  and Alexander Rives },
title = {Evolutionary-scale prediction of atomic-level protein structure with a language model},
journal = {Science},
volume = {379},
number = {6637},
pages = {1123-1130},
year = {2023},
doi = {10.1126/science.ade2574},
URL = {https://www.science.org/doi/abs/10.1126/science.ade2574},
eprint = {https://www.science.org/doi/pdf/10.1126/science.ade2574}}

@article{Whelan2001AGE,
  title={A general empirical model of protein evolution derived from multiple protein families using a maximum-likelihood approach.},
  author={Simon Whelan and Nick Goldman},
  journal={Molecular biology and evolution},
  year={2001},
  volume={18 5},
  pages={691-9},
  url={https://api.semanticscholar.org/CorpusID:44418374}
}

@article{nijkamp2022progen2exploringboundariesprotein,
  title={ProGen2: Exploring the Boundaries of Protein Language Models},
  author={Erik Nijkamp and Jeffrey A. Ruffolo and Eli N. Weinstein and Nikhil Vijay Naik and Ali Madani},
  journal={Cell systems},
  year={2022},
  url={https://api.semanticscholar.org/CorpusID:250089293}
}

@article{Ralph2022InferenceOB,
  title={Inference of B cell clonal families using heavy/light chain pairing information},
  author={Duncan K. Ralph and Matsen IV, Frederick Albert},
  journal={PLOS Computational Biology},
  year={2022},
  volume={18},
  url={https://api.semanticscholar.org/CorpusID:247596752}
}

@article{halpern1998evolutionary,
  title={Evolutionary distances for protein-coding sequences: modeling site-specific residue frequencies.},
  author={Halpern, Aaron L and Bruno, William J},
  journal={Molecular biology and evolution},
  volume={15},
  number={7},
  pages={910--917},
  year={1998}
}

@article{kimura1962probability,
  title={On the probability of fixation of mutant genes in a population},
  author={Kimura, Motoo},
  journal={Genetics},
  volume={47},
  number={6},
  pages={713},
  year={1962}
}

@article{sung2025thrifty,
  title={Thrifty wide-context models of B cell receptor somatic hypermutation},
  author={Sung, Kevin and Johnson, Mackenzie M and Dumm, Will and Simon, Noah and Haddox, Hugh and Fukuyama, Julia and Matsen IV, Frederick A},
  journal={Elife},
  volume={14},
  pages={RP105471},
  year={2025},
  publisher={eLife Sciences Publications Limited}
}

@article{chungyoun2025fitness,
  title={Fitness Landscape for Antibodies 2: Benchmarking Reveals That Protein AI Models Cannot Yet Consistently Predict Developability Properties},
  author={Chungyoun, Michael and Gray, Jeffrey J},
  journal={bioRxiv},
  pages={2025--12},
  year={2025},
  publisher={Cold Spring Harbor Laboratory}
}

@inproceedings{
    nisonoff2025unlockingguidancediscretestatespace,
    title={Unlocking Guidance for Discrete State-Space Diffusion and Flow Models},
    author={Hunter Nisonoff and Junhao Xiong and Stephan Allenspach and Jennifer Listgarten},
    booktitle={The Thirteenth International Conference on Learning Representations},
    year={2025},
    url={https://openreview.net/forum?id=XsgHl54yO7}
}

@article{jaffe2022functional,
  title={Functional antibodies exhibit light chain coherence},
  author={Jaffe, David B and Shahi, Payam and Adams, Bruce A and Chrisman, Ashley M and Finnegan, Peter M and Raman, Nandhini and Royall, Ariel E and Tsai, FuNien and Vollbrecht, Thomas and Reyes, Daniel S and others},
  journal={Nature},
  volume={611},
  number={7935},
  pages={352--357},
  year={2022},
  publisher={Nature Publishing Group UK London}
}

@article{tang2022deep,
  title={Deep learning model of somatic hypermutation reveals importance of sequence context beyond hotspot targeting},
  author={Tang, Catherine and Krantsevich, Artem and MacCarthy, Thomas},
  journal={Iscience},
  volume={25},
  number={1},
  year={2022},
  publisher={Elsevier}
}

@article{vergani2017novel,
  title={Novel method for high-throughput full-length IGHV-DJ sequencing of the immune repertoire from bulk B-cells with single-cell resolution},
  author={Vergani, Stefano and Korsunsky, Ilya and Mazzarello, Andrea Nicola and Ferrer, Gerardo and Chiorazzi, Nicholas and Bagnara, Davide},
  journal={Frontiers in immunology},
  volume={8},
  pages={1157},
  year={2017},
  publisher={Frontiers Media SA}
}

@article{engelbrecht2025germline,
  title={Germline polymorphisms in the immunoglobulin kappa and lambda loci underpinning antibody light chain repertoire variability},
  author={Engelbrecht, Eric and Rodriguez, Oscar L and Lees, William and Vanwinkle, Zach and Shields, Kaitlyn and Schultze, Steven and Gibson, William S and Smith, David R and Jana, Uddalok and Saha, Swati and others},
  journal={Nature Communications},
  year={2025},
  publisher={Nature Publishing Group UK London}
}

@article{rodriguez2023genetic,
  title={Genetic variation in the immunoglobulin heavy chain locus shapes the human antibody repertoire},
  author={Rodriguez, Oscar L and Safonova, Yana and Silver, Catherine A and Shields, Kaitlyn and Gibson, William S and Kos, Justin T and Tieri, David and Ke, Hanzhong and Jackson, Katherine JL and Boyd, Scott D and others},
  journal={Nature communications},
  volume={14},
  number={1},
  pages={4419},
  year={2023},
  publisher={Nature Publishing Group UK London}
}

@article{matsen2025separating,
  title={Separating selection from mutation in antibody language models},
  author={Frederick A. Matsen and Will Dumm and Kevin Sung and Mackenzie M. Johnson and David Rich and Tyler N. Starr and Yun S. Song and Julia Fukuyama and Hugh K. Haddox},
  journal={eLife},
  year={2026},
  volume={15},
  url={https://api.semanticscholar.org/CorpusID:287307817}
}

@inproceedings{jin2021iterative,
  title={Iterative Refinement Graph Neural Network for Antibody Sequence-Structure Co-design},
  author={Wengong Jin and Jeremy Wohlwend and Regina Barzilay and Tommi S. Jaakkola},
  booktitle={International Conference on Learning Representations},
  year={2022},
  url={https://openreview.net/forum?id=LI2bhrE_2A}
}

@article{olsen2022observed,
  title={Observed Antibody Space: A diverse database of cleaned, annotated, and translated unpaired and paired antibody sequences},
  author={Olsen, Tobias H and Boyles, Fergus and Deane, Charlotte M},
  journal={Protein Science},
  volume={31},
  number={1},
  pages={141--146},
  year={2022},
  publisher={Wiley Online Library}
}

@inproceedings{gordon2024generativehumanizationtherapeuticantibodies,
title={Generative Humanization for Therapeutic Antibodies},
author={Cade W Gordon and Aniruddh Raghu and Peyton Greenside and Hunter Elliott},
booktitle={ICLR 2024 Workshop on Generative and Experimental Perspectives for Biomolecular Design},
year={2024},
url={https://openreview.net/forum?id=LiQUkaawXI}
}

@article{Raybould2020CoVAbDabTC,
  title={CoV-AbDab: the Coronavirus Antibody Database},
  author={Matthew I. J. Raybould and Aleksandr Kovaltsuk and Claire Marks and Charlotte M. Deane},
  journal={Bioinformatics},
  year={2020},
  url={https://api.semanticscholar.org/CorpusID:218764844}
}

@article{zhang2024protein,
  title={Protein language models learn evolutionary statistics of interacting sequence motifs},
  author={Zhang, Zhidian and Wayment-Steele, Hannah K and Brixi, Garyk and Wang, Haobo and Kern, Dorothee and Ovchinnikov, Sergey},
  journal={Proceedings of the National Academy of Sciences},
  volume={121},
  number={45},
  pages={e2406285121},
  year={2024},
  publisher={National Academy of Sciences}
}

@article{Prillo2024UltrafastCP,
  title={Ultrafast classical phylogenetic method beats large protein language models on variant effect prediction},
  author={Sebastian Prillo and Wilson Wu and Yun S. Song},
  journal={Advances in neural information processing systems},
  year={2024},
  volume={37},
  pages={
          130265-130290
        },
  url={https://api.semanticscholar.org/CorpusID:276318499}
}

@inproceedings{Albors2025APA,
  title={A phylogenetic approach to genomic language modeling},
  author={Albors, Carlos and Li, Jianan Canal and Benegas, Gonzalo and Ye, Chengzhong and Song, Yun S},
  booktitle={International Conference on Research in Computational Molecular Biology},
  pages={99--117},
  year={2025},
  organization={Springer}
}

@article{koenig2017mutational,
  title={Mutational landscape of antibody variable domains reveals a switch modulating the interdomain conformational dynamics and antigen binding},
  author={Koenig, Patrick and Lee, Chingwei V and Walters, Benjamin T and Janakiraman, Vasantharajan and Stinson, Jeremy and Patapoff, Thomas W and Fuh, Germaine},
  journal={Proceedings of the National Academy of Sciences},
  volume={114},
  number={4},
  pages={E486--E495},
  year={2017},
  publisher={National Academy of Sciences}
}

@article{shanehsazzadeh2023unlocking,
  title={Unlocking de novo antibody design with generative artificial intelligence},
  author={Shanehsazzadeh, Amir and Bachas, Sharrol and McPartlon, Matt and Kasun, George and Sutton, John M and Steiger, Andrea K and Shuai, Richard and Kohnert, Christa and Rakocevic, Goran and Gutierrez, Jahir M and others},
  journal={BioRxiv},
  pages={2023--01},
  year={2023},
  publisher={Cold Spring Harbor Laboratory}
}

@article{adams2016measuring,
  title={Measuring the sequence-affinity landscape of antibodies with massively parallel titration curves},
  author={Adams, Rhys M and Mora, Thierry and Walczak, Aleksandra M and Kinney, Justin B},
  journal={Elife},
  volume={5},
  pages={e23156},
  year={2016},
  publisher={eLife Sciences Publications, Ltd}
}

@article{Ehrenmann2010IMGTDomainGapAlign,
  title        = {IMGT/3Dstructure-DB and IMGT/DomainGapAlign: a database and a tool for immunoglobulins or antibodies, T cell receptors, MHC, IgSF and MhcSF},
  author       = {Ehrenmann, Fran{\c{c}}ois and Kaas, Quentin and Lefranc, Marie-Paule},
  journal      = {Nucleic Acids Research},
  year         = {2010},
  volume       = {38},
  number       = {suppl\_1},
  pages        = {D301--D307},
  doi          = {10.1093/nar/gkp946}
}

@article{Midelfort2004HighAffinityMutantAntibody,
  title   = {Substantial energetic improvement with minimal structural perturbation in a high affinity mutant antibody},
  author  = {Midelfort, Kristin S and Hernandez, Hugo H and Lippow, Stephanie M and Tidor, Bruce and Drennan, Catherine L and Wittrup, K Dane},
  journal = {Journal of Molecular Biology},
  year    = {2004},
  volume  = {343},
  number  = {3},
  pages   = {685--701},
  doi     = {10.1016/j.jmb.2004.08.019}
}

@article{Nakamura2000CUTG,
  title   = {Codon usage tabulated from international DNA sequence databases: status for the year 2000},
  author  = {Nakamura, Yasukazu and Gojobori, Takashi and Ikemura, Toshimichi},
  journal = {Nucleic Acids Research},
  year    = {2000},
  volume  = {28},
  number  = {1},
  pages   = {292},
  doi     = {10.1093/nar/28.1.292},
  url     = {https://www.kazusa.or.jp/codon/},
  note    = {Codon frequencies from Homo sapiens table: \url{https://www.kazusa.or.jp/codon/cgi-bin/showcodon.cgi?species=9606&aa=1}}
}

@Article{Ng2024FocusedLB,
    author={Ng, Karenna
    and Briney, Bryan},
    title={Focused learning by antibody language models using preferential masking of non-templated regions},
    journal={Patterns},
    year={2025},
    month={Jun},
    day={13},
    publisher={Elsevier},
    volume={6},
    number={6},
    issn={2666-3899},
    doi={10.1016/j.patter.2025.101239},
    url={https://doi.org/10.1016/j.patter.2025.101239}
}

@article{Olsen2024AddressingTA,
  title={Addressing the antibody germline bias and its effect on language models for improved antibody design},
  author={Tobias Hegelund Olsen and Iain H. Moal and Charlotte M. Deane},
  journal={Bioinformatics},
  year={2024},
  volume={40},
  url={https://api.semanticscholar.org/CorpusID:267575279}
}

@article{Kenlay2024ABodyBuilder3IA,
  title={ABodyBuilder3: improved and scalable antibody structure predictions},
  author={Henry Kenlay and Fr{\'e}d{\'e}ric A. Dreyer and Daniel Cutting and Daniel A. Nissley and Charlotte M. Deane},
  journal={Bioinformatics},
  year={2024},
  volume={40},
  url={https://api.semanticscholar.org/CorpusID:270199272}
}

@article{Prihoda2021BioPhiAP,
  title={BioPhi: A platform for antibody design, humanization, and humanness evaluation based on natural antibody repertoires and deep learning},
  author={David Prihoda and Jad Maamary and Andrew B. Waight and Ver{\'o}nica Juan and Laurence Fayadat-Dilman and Daniel Svozil and Danny A. Bitton},
  journal={mAbs},
  year={2021},
  volume={14},
  url={https://api.semanticscholar.org/CorpusID:236971421}
}

@article{Graves2020ARO,
  title={A Review of Deep Learning Methods for Antibodies},
  author={Jordan Graves and Jacob Byerly and Eduardo Priego and Naren Makkapati and S. Vince Parish and Brenda P. Medellin and Monica Berrondo},
  journal={Antibodies},
  year={2020},
  volume={9},
  url={https://api.semanticscholar.org/CorpusID:218469594}
}

@article{Ruffolo2021DecipheringAA,
  title={Deciphering antibody affinity maturation with language models and weakly supervised learning},
  author={Jeffrey A. Ruffolo and Jeffrey J. Gray and Jeremias Sulam},
  journal={ArXiv},
  year={2021},
  volume={abs/2112.07782},
  url={https://api.semanticscholar.org/CorpusID:245144689}
}

@article{Leem2021DecipheringTL,
  title={Deciphering the language of antibodies using self-supervised learning},
  author={Jinwoo Leem and Laura S. Mitchell and James H.R. Farmery and Justin Barton and Jacob D. Galson},
  journal={Patterns},
  year={2021},
  volume={3},
  url={https://api.semanticscholar.org/CorpusID:244060395}
}

@article{Olsen2022AbLangAA,
  title={AbLang: an antibody language model for completing antibody sequences},
  author={Tobias Hegelund Olsen and Iain H. Moal and Charlotte M. Deane},
  journal={Bioinformatics Advances},
  year={2022},
  volume={2},
  url={https://api.semanticscholar.org/CorpusID:246226399}
}

@article{Hie2023EfficientEO,
  title={Efficient evolution of human antibodies from general protein language models},
  author={Brian L. Hie and Varun R. Shanker and Duo Xu and Theodora U. J. Bruun and Payton A.-B. Weidenbacher and Shaogeng Tang and Wesley Wu and John E. Pak and Peter S. Kim},
  journal={Nature Biotechnology},
  year={2023},
  volume={42},
  pages={275 - 283},
  url={https://api.semanticscholar.org/CorpusID:263364541}
}

@article{Kenlay2024LargeSP,
  title={Large scale paired antibody language models},
  author={Henry Kenlay and Fr{\'e}d{\'e}ric A. Dreyer and Aleksandr Kovaltsuk and Dom Miketa and Douglas E. V. Pires and Charlotte M. Deane},
  journal={PLOS Computational Biology},
  year={2024},
  volume={20},
  url={https://api.semanticscholar.org/CorpusID:268691578}
}

@article{Wang2025SupervisedFO,
  title={Supervised fine-tuning of pre-trained antibody language models improves antigen specificity prediction},
  author={Meng Wang and Jonathan Patsenker and Henry Li and Yuval Kluger and Steven H. Kleinstein},
  journal={PLOS Computational Biology},
  year={2025},
  volume={21},
  url={https://api.semanticscholar.org/CorpusID:277464195}
}

@article{DiNoia2007MolecularMO,
  title={Molecular mechanisms of antibody somatic hypermutation.},
  author={Javier Marcelo Di Noia and Michael Samuel Neuberger},
  journal={Annual review of biochemistry},
  year={2007},
  volume={76},
  pages={1-22},
  url={https://api.semanticscholar.org/CorpusID:40573878}
}

@article{Wu2003ImmunoglobulinSH,
  title={Immunoglobulin Somatic Hypermutation: Double-Strand DNA Breaks, AID and Error-Prone DNA Repair},
  author={Xiaoping Wu and Junli Feng and Atsumasa Komori and Edmund C. Kim and Hong Zan and Paolo Giovanni Casali},
  journal={Journal of Clinical Immunology},
  year={2003},
  volume={23},
  pages={235-246},
  url={https://api.semanticscholar.org/CorpusID:20676562}
}

@article{Pedersen2001ADM,
  title={A dependent-rates model and an MCMC-based methodology for the maximum-likelihood analysis of sequences with overlapping reading frames.},
  author={Anne-Mette Krabbe Pedersen and Jens Ledet Jensen},
  journal={Molecular biology and evolution},
  year={2001},
  volume={18 5},
  pages={763-76},
  url={https://api.semanticscholar.org/CorpusID:7871916}
}

@article{Gesell2006InSS,
  title={In silico sequence evolution with site-specific interactions along phylogenetic trees},
  author={Tanja Gesell and Arndt von Haeseler},
  journal={Bioinformatics},
  year={2006},
  volume={22 6},
  pages={716-22},
  url={https://api.semanticscholar.org/CorpusID:260854163}
}

@inproceedings{gal2016dropoutbayesianapproximationrepresenting,
  title={Dropout as a Bayesian Approximation: Representing Model Uncertainty in Deep Learning},
  author={Yarin Gal and Zoubin Ghahramani},
  booktitle={International Conference on Machine Learning},
  year={2015},
  url={https://api.semanticscholar.org/CorpusID:160705}
}

@article{olsen2024addressing,
  title={Addressing the antibody germline bias and its effect on language models for improved antibody design},
  author={Olsen, Tobias H and Moal, Iain H and Deane, Charlotte M},
  journal={Bioinformatics},
  volume={40},
  number={11},
  pages={btae618},
  year={2024},
  publisher={Oxford University Press}
}

@article{Le2012ModelingPE,
  title={Modeling protein evolution with several amino acid replacement matrices depending on site rates.},
  author={Si Quang Le and Cuong Cao Dang and Olivier Gascuel},
  journal={Molecular biology and evolution},
  year={2012},
  volume={29 10},
  pages={2921-36},
  url={https://api.semanticscholar.org/CorpusID:2136711}
}

@inproceedings{Notin2023ProteinGymLB,
  title={ProteinGym: Large-Scale Benchmarks for Protein Fitness Prediction and Design},
  author={Pascal Notin and Aaron W. Kollasch and Daniel Ritter and Lood van Niekerk and Steffanie Paul and Han Spinner and Nathan J. Rollins and Ada Shaw and Rose Orenbuch and Ruben Weitzman and Jonathan Frazer and Mafalda Dias and Dinko Franceschi and Yarin Gal and Debora S. Marks},
  booktitle={Neural Information Processing Systems},
  year={2023},
  url={https://api.semanticscholar.org/CorpusID:268064869}
}

@article{Brard2008SolvableMO,
  title={Solvable models of neighbor-dependent substitution processes.},
  author={Jean B{\'e}rard and J.-B. Gou{\'e}r{\'e} and Didier Piau},
  journal={Mathematical biosciences},
  year={2008},
  volume={211 1},
  pages={56-88},
  url={https://api.semanticscholar.org/CorpusID:34618868}
}

@article{Felsenstein1996AHM,
  title={A Hidden Markov Model approach to variation among sites in rate of evolution.},
  author={Joseph Felsenstein and Gary A. Churchill},
  journal={Molecular biology and evolution},
  year={1996},
  volume={13 1},
  pages={93-104},
  url={https://api.semanticscholar.org/CorpusID:14356869}
}

@article{Felsenstein2005EvolutionaryTF,
  title={Evolutionary trees from DNA sequences: A maximum likelihood approach},
  author={Joseph Felsenstein},
  journal={Journal of Molecular Evolution},
  year={2005},
  volume={17},
  pages={368-376},
  url={https://api.semanticscholar.org/CorpusID:8024924}
}

@article{Koehl2026DeepMO,
  title={Deep models of protein evolution in time generate realistic evolutionary trajectories and functional proteins},
  author={Antoine Koehl and Sebastian Prillo and Matthew Liu and Junhao Xiong and Lily Weng and David F. Savage and Yun S. Song},
  journal={bioRxiv},
  year={2026},
  url={https://api.semanticscholar.org/CorpusID:285971780}
}

@inproceedings{Gat2024DiscreteFM,
    title={Discrete Flow Matching},
    author={Itai Gat and Tal Remez and Neta Shaul and Felix Kreuk and Ricky T. Q. Chen and Gabriel Synnaeve and Yossi Adi and Yaron Lipman},
    booktitle={The Thirty-eighth Annual Conference on Neural Information Processing Systems},
    year={2024},
    url={https://openreview.net/forum?id=GTDKo3Sv9p}
}

@inproceedings{Campbell2024GenerativeFO,
    author = {Campbell, Andrew and Yim, Jason and Barzilay, Regina and Rainforth, Tom and Jaakkola, Tommi},
    title = {Generative flows on discrete state-spaces: enabling multimodal flows with applications to protein co-design},
    year = {2024},
    publisher = {JMLR.org},
    booktitle = {Proceedings of the 41st International Conference on Machine Learning},
    articleno = {213},
    numpages = {60},
    location = {Vienna, Austria},
    series = {ICML'24}
}

@article{Yang2019ImprovedPS,
  title={Improved protein structure prediction using predicted interresidue orientations},
  author={Jianyi Yang and Ivan V. Anishchenko and Hahnbeom Park and Zhenling Peng and Sergey Ovchinnikov and David Baker},
  journal={Proceedings of the National Academy of Sciences},
  year={2019},
  volume={117},
  pages={1496 - 1503},
  url={https://api.semanticscholar.org/CorpusID:209563981}
}

@article{Petersen2024AnIT,
  title={An integrated technology for quantitative wide mutational scanning of human antibody Fab libraries},
  author={Brian M. Petersen and Monica B Kirby and Karson M. Chrispens and Olivia M. Irvin and Isabell K. Strawn and Cyrus M. Haas and Alexis M. Walker and Zachary T. Baumer and Sophia A. Ulmer and Edgardo Ayala and Emily R. Rhodes and Jenna J. Guthmiller and Paul J. Steiner and Timothy A. Whitehead},
  journal={Nature Communications},
  year={2024},
  volume={15},
  url={https://api.semanticscholar.org/CorpusID:269716028}
}

@InProceedings{pmlr-v311-im25a,
  title = 	 {Somatic Hypermutation Informed Vocabulary Encoder Representations},
  author =       {Im, Chiho and Mikelov, Artem and Zhao, Ryan and Kundaje, Anshul and Boyd, Scott},
  booktitle = 	 {Proceedings of the 20th Machine Learning in Computational Biology meeting},
  pages = 	 {240--250},
  year = 	 {2025},
  editor = 	 {Knowles, David A and Koo, Peter K},
  volume = 	 {311},
  series = 	 {Proceedings of Machine Learning Research},
  month = 	 {10--11 Sep},
  publisher =    {PMLR},
  url = 	 {https://proceedings.mlr.press/v311/im25a.html}
}

@article{Kim2026ExplicitRO,
  title={Explicit representation of germline and non-germline residues improves antibody language modeling},
  author={Jeonghyeon Kim and Nathaniel Blalock and Ameya Kulkarni and Kensuke Nakamura and Philip A. Romero},
  journal={bioRxiv},
  year={2026},
  url={https://api.semanticscholar.org/CorpusID:288186478}
}
\bibliographystyle{icml2026}

\newpage
\appendix
\beginsupplement
\crefalias{section}{appendix}
\onecolumn

\section{Additional Details of \ourmodel}\label{apx:training-details}

\subsection{Data Collection and Model Training} 

To train the \ourmodel{} model, we compiled B-cell receptor (BCR) sequencing datasets from five sources \citep{jaffe2022functional, tang2022deep, vergani2017novel, engelbrecht2025germline, rodriguez2023genetic}. We adopted the data processing and phylogenetic inference protocol described by \cite{matsen2025separating}. Briefly, sequences were clustered into clonal families and naive germlines were inferred using \texttt{partis} \citep{Ralph2022InferenceOB}. We retained families with at least two productive sequences, defining productivity by the absence of stop codons and the presence of canonical cysteine and tryptophan codons flanking the CDR3 in the same reading frame as the V segment start. Consistent with recent large language model training pipelines (e.g., AbLang-2), we further excluded sequences with mutated conserved signature cysteines. Insertions and deletions were reversed to align all sequences to their naive ancestor without gaps.

For phylogenetic reconstruction, we performed tree inference and ancestral sequence reconstruction (ASR) using IQ-TREE \citep{nguyenIQTREEFastEffective2015} under the K80 substitution model, using the naive sequence as an outgroup. We accounted for mutation rate heterogeneity across sites via a 4-category FreeRate model. For paired heavy and light chain data, we specifically employed the edge-linked-proportional partition model to allow the chains to evolve at distinct overall rates. This pipeline yielded a final training set of parent-child pairs (PCPs) extracted from the edges of the resulting phylogenetic trees, comprising approximately $\sim$2 million transitions from $\sim$120,000 clonal families collected from 555 individual donors. We defer further details to the \citeauthor{matsen2025separating} paper.

Models were trained with mixed precision (BF16) for a maximum of 1 million steps, using a batch size of $16$ with gradient accumulation over 3 steps. We employed the AdamW optimizer with a learning rate of $2.5\times10^{-4}$, utilizing 5,000 warmup steps followed by a polynomial decay schedule with a power of 2.0. To prevent overfitting, we applied a weight decay of 0.01 specifically to parameters with two or more dimensions (e.g., weights and embeddings), while excluding biases and layer normalization parameters. Gradients were clipped at a norm of $1.0$, and training employed early stopping with a patience of 50 intervals based on validation loss. In practice, the loss converged after about 1 day on a single A100 GPU.

\subsection{Gillespie and \textit{Guided Gillespie} Sampling}

We provide the pseudocode for both unconditional Gillespie sampling and guided Gillespie sampling using \ourmodel{}. Differences for the guided version are highlighted in red. Notice that only a single evaluation of the predictor is required per step. 

\begin{figure}[h!]
    \centering
    \begin{minipage}[t]{0.42\textwidth}
        \begin{algorithm}[H]
            \caption{Gillespie Sampling}
            \label{algo:gillespie}
            \begin{algorithmic}[1]
                \INPUT Model $Q_\theta$, start sequence $x$, branch length $t$
                \OUTPUT Samples $y\sim P(\cdot\mid x,t)$
                \STATE $t' \leftarrow 0$
                \WHILE{$t' < t$}
                    \STATE $\lambda_x\leftarrow -\sum_{\ell=1}^L Q_\theta(x)_{x_\ell,x_\ell}$
                    \STATE $\tau\sim\text{Exp}(\lambda_x)$
                    \IF{$t'+\tau > t$}
                        \STATE Return $y\leftarrow x$
                    \ENDIF
                    \STATE Sample $(\ell^*, a^*)$ with \\\quad $P(\ell,a) = (Q_\theta(x)_\ell)_{x_\ell,a}/\lambda_{x}$
                    \STATE $x_{\ell^*} \leftarrow a^*$
                    \STATE $t' \leftarrow t' + \tau$
                \ENDWHILE
                \STATE Return $y\leftarrow x$
            \end{algorithmic}
        \end{algorithm}
    \end{minipage}
    \hfill
    \begin{minipage}[t]{0.55\textwidth}
        \begin{algorithm}[H]
            \caption{Guided Gillespie Sampling}
            \label{algo:guided-gillespie}
            \begin{algorithmic}[1]
                \INPUT Model $Q_\theta$, start sequence $x$, time $t$, \textcolor{red}{predictor $\mu_\theta, \sigma_\theta$, scale $\gamma$}
                \OUTPUT Samples $y\sim P(\cdot\mid x,t, \textcolor{red}{z})$
                \STATE $t' \leftarrow 0$
                \WHILE{$t' < t$}
                    \STATE \textcolor{red}{$g \leftarrow \nabla_x \mu_\theta(x)$}
                    \STATE \textcolor{red}{$\tilde{Q}_{x,y} \leftarrow Q_{x,y} \cdot [2\Phi(g^\top(y-x)/\sigma_\theta(x))]^\gamma$}
                    \STATE $\lambda_x\leftarrow \sum_{y \neq x} \tilde{Q}_{x,y}$
                    \STATE $\tau\sim\text{Exp}(\lambda_x)$
                    \IF{$t'+\tau > t$}
                        \STATE Return $y\leftarrow x$
                    \ENDIF
                    \STATE Sample $(\ell^*, a^*)$ with $P(\ell,a) = (\tilde{Q}_\theta(x)_\ell)_{x_\ell,a}/\lambda_{x}$
                    \STATE $x_{\ell^*} \leftarrow a^*$
                    \STATE $t' \leftarrow t' + \tau$
                \ENDWHILE
                \STATE Return $y\leftarrow x$
            \end{algorithmic}
        \end{algorithm}
    \end{minipage}
\end{figure}

\subsection{Comparison of \ourmodel{} against DASM+Thrifty}\label{apx:dasm-vs-cosine}

During training, DASM~\citep{matsen2025separating} explicitly factorizes the affinity maturation process into the product of a somatic hypermutation (SHM) process $q(y\mid x,t)$, with a selection function $F(y)$. Inferring both $q$ and $F$ simultaneously from data is not possible since there is now an extra degree of freedom. Instead, DASM utilizes a frozen Thrifty SHM model $q_{\phi}$, trained on out-of-frame transitions, and proposes to learn $F_{\theta}(y)$ via MLE of the observed transitions.

\begin{align*}
p(y\mid x) = \prod_{\ell=1}^L p(y_\ell\mid x,t) &= \prod_{\ell=1}^L q(y_\ell\mid x,t)F_{\theta}(y_\ell\mid x)\quad \text{s.t.} \\
p(x_\ell\mid x,t) &= 1-\sum_{a\in\mathcal{A}}q(a\mid x,t)F_{\theta}(a\mid x)
\end{align*}

The second constraint essentially subsumes the normalizing constant into the probability of no-transition. The authors clamp the sum on the right-hand side of the constraint to be less than 1 in order to maintain a valid probability distribution.

Instead of using this formulation, which requires manual clamping of the selection scores, \ourmodel{} infers $F_{\theta}(y\mid x)$ at inference time using a log-likelihood ratio between our pre-trained affinity maturation and somatic hypermutation models (\cref{eq:sel_score}).

\subsection{Comparison of \ourmodel{} against Discrete Flow Matching}

While \ourmodel{} and Discrete Flow Matching (DFM) \cite{Gat2024DiscreteFM, Campbell2024GenerativeFO} both rely on continuous-time Markov chains, they serve fundamentally different objectives. DFM is a dynamical generative framework that maps a tractable prior (e.g., uniform noise) to a target data distribution. This transformation occurs over an artificial algorithmic time horizon, typically $\tau \in [0, 1]$, where intermediate states act as a computational mechanism for exact sampling rather than representing a physical temporal process. In contrast, \ourmodel{} is a phylogenetic model designed to learn biological transition kernels. It explicitly models the evolutionary trajectory from an ancestral sequence to a descendant sequence over real, unbounded biological time ($t > 0$). 

Although their core objectives differ, connecting these frameworks offers practical directions for future work. For example, DFM could be used to learn expressive stationary priors to mathematically constrain the evolutionary trajectories modeled by \ourmodel{}, or flow-matching objectives could be adapted to operate over the unnormalized biological timescales required for phylogenetic inference.

\clearpage

\section{Experimental Details}\label{apx:exp-details}

\subsection{Synthetic Experiments on Single Codons}\label{apx:codon-exp}

To empirically validate \cref{sec:prop1-main-text} and \cref{sec:lemma1-main-text}, we utilized a computationally tractable state space that allows for manipulation of the full rate matrix. Specifically, we modeled short DNA sequences with length $L=3$ over the vocabulary $\mathcal{D}=\{A,G,C,T\}$, thus obtaining a state space that corresponds exactly to the 64 standard codons. To study processes with different levels of epistasis, we constructed ground truth rate matrices using linear
interpolation between two extremes:
$$
\mathbf{Q}_\text{true} = (1 - \varepsilon) \mathbf{Q}_{\text{factorized}} + \varepsilon \mathbf{Q}_{\text{state-dep}}
$$
where $\varepsilon \in [0,1]$ is the epistasis strength parameter. For $\mathbf{Q}_{\text{factorized}}$ (no epistasis), we sampled
site-independent base rates: for each site $\ell \in \{1,2,3\}$, we drew a $4 \times 4$ rate matrix with off-diagonal entries
from $\text{Uniform}(0, 2)$ and diagonal entries set to the negative row sum. The global $64 \times 64$ matrix
$\mathbf{Q}_{\text{factorized}}$ assigns rate $r_{\ell,a,b}$ to transitions differing only at site $\ell$ with substitution $a
\to b$. For $\mathbf{Q}_{\text{state-dep}}$ (maximum epistasis), each of the $64\times9 = 576$ transitions with a Hamming distance equal to 1 receives an independent
rate drawn from $\text{Uniform}(0, 2)$. At $\varepsilon=0$, rates are perfectly factorizable; at $\varepsilon=1$, every
transition is state-dependent.

For each $\mathbf{Q}_\text{true}$ that we drew in this way, we generated 2.5M training samples by drawing branch lengths $b \sim \text{Exp}(\lambda=0.5)$, sampling start states $x$ uniformly, and sampling end states from $P(\cdot\mid x,t) =
\exp(t\mathbf{Q}_\text{true})_{x,\cdot}$. We compared three estimators: (1) \textbf{Full MLE}, which directly parameterizes all 576 transition rates and optimizes via gradient descent on the exact likelihood; (2) \textbf{Factorized}, a neural model that outputs context-dependent site-level $4 \times 4$ rate matrices and assumes site-independence during training; and (3) \textbf{Factorized SNR}, which uses the same architecture as (2) but applies SNR weighting (\cref{sec:snr-weighted-loss}). All models were trained for up to 1000 epochs with Adam ($\text{lr}=0.01$ for factorized models, $\text{lr}=0.1$ for Full MLE) and early stopping (patience=50). We tested $\varepsilon \in \{0, 0.25, 0.5, 0.75, 1.0\}$ with 3 replicates per level, measuring estimation error as the relative difference in Frobenius norm of the estimated and ground truth rate matrices: $\|\mathbf{Q}_{\text{est}} - \mathbf{Q}_{\text{true}}\|_F
/ \|\mathbf{Q}_{\text{true}}\|_F$.

Using the trained factorized models, we compare Gillespie sampling (\cref{algo:gillespie}) against per-site matrix exponentiation (\cref{eq:cosine-transition-prob}). To evaluate each sampling method without sampling noise, we compute exact transition probability distributions analytically. For Gillespie, we achieve this by first reconstructing the full $64 \times 64$ estimated rate matrix by querying the model at all 64 states, then computing the transition probability matrix. We compare both methods for the factorized and SNR-weighted models (4 curves total) by measuring KL divergence to the ground truth transition probability across 30 log-spaced branch lengths from $t=0.01$ to $t=10.0$, averaged uniformly over all 64 starting states.

\subsection{Calculating the Categorical Jacobian via Perturbation}\label{sec:catjac-details}

To quantify the epistatic interactions learned by \ourmodel{}, we approximate the Jacobian by exhaustively computing all single point mutations. This procedure is described in \cref{alg:cat_jacobian} with an example output sensitivity matrix depicted in \cref{fig:categorical_jacobian}.

\begin{algorithm}[h]
   \caption{Categorical Jacobian Computation}
   \label{alg:cat_jacobian}
\begin{algorithmic}[1]
   \STATE {\bfseries Input:} Antibody sequence $x$ of length $L$, Vocabulary $\mathcal{A}$ ($|\mathcal{A}|=20$)
   \STATE {\bfseries Output:} Sensitivity Matrix $\mathbf{S} \in \mathbb{R}^{L \times L}$
   
   \STATE Initialize $\mathbf{S} \leftarrow \mathbf{0}_{L \times L}$
   \STATE Compute wildtype rate matrices: $Q = Q_\theta(x)$
   \FOR{$i = 1$ {\bfseries to} $L$}
       \FOR{$a \in \mathcal{A}$ \textbf{such that} $a \neq \mathbf{x}_i$}
           \STATE $x' \leftarrow x$
           \STATE $x'_i \leftarrow a$ \COMMENT{Mutate residue at position $i$ to $a$}
           
           \STATE $Q' \leftarrow Q_\theta(x')$
           
           \FOR{$j = 1$ {\bfseries to} $L$}
               \STATE $\mathbf{S}_{i,j} \leftarrow \mathbf{S}_{i,j} + \lVert \mathbf{Q}_j - \mathbf{Q}'_j \rVert_F\quad$ \COMMENT{Calculate shift in output at position $j$}
           \ENDFOR
       \ENDFOR
       \STATE $\mathbf{S}_{i,:} \leftarrow \mathbf{S}_{i,:} / (|\mathcal{A}| - 1)\quad$ \COMMENT{Average over all possible mutations}
   \ENDFOR
   \STATE {\bfseries Return} $\mathbf{S}$
\end{algorithmic}
\end{algorithm}

\subsection{Variant Effect Prediction Details}

\subsubsection{DMS Assays and Baseline Models}
\label{subsec:dms-info}

The results shown in \cref{tab:vep-full-width} came from four DMS assays from the FLAb2 benchmark \citep{chungyoun2025fitness}. We provide more details on these datasets in \cref{tab:dms_datasets}. Following \citet{matsen2025separating}, we obtain the Koenig and Shanehsazzadeh (Shaneh.) datasets from commit 67738ee (April 17, 2024) of the FLAb Github repository and evaluate all models on these assays. Note that a later commit to the repository modified these assays, so the results we report may differ from those reported by \citet{Kim2026ExplicitRO}, who use the updated version. We use a single fixed version across all baselines to ensure a controlled comparison. The Adams dataset is taken from commit 3453aeb (September 1, 2025). The Adams dataset contains multiple fitness measurements per mutant sequence, so we correlate antibody model predictions with the average fitness per mutant during VEP evaluation.

An additional step we must take when calculating the \ourmodel{} selection score for these assays is determining the underlying nucleotide sequence for the wildtype antibody so that we can calculate the transition likelihood under an SHM model (see \cref{sec:thrifty} for more details). For the Koenig and Shaneh. datasets, we use \texttt{IMGT/DomainGapAlign} to map the wildtype amino acid sequence to the closest germline V and J genes \citep{Ehrenmann2010IMGTDomainGapAlign}. We then obtain the V- and J-segment nucleotide sequences from HG38 and backtranslate remaining mismatches and junction regions to the codon with the highest frequency in the human genome (\citep{Nakamura2000CUTG}). The Adams dataset uses mouse antibody 4-4-20 scFv as its wildtype, and its nucleotide sequence was obtained from Addgene plasmid pCT302 \citep{Midelfort2004HighAffinityMutantAntibody}. 

\begin{table*}[h]
    \centering
    \small 
    \caption{\textbf{Overview of Deep Mutational Scanning (DMS) Datasets.}  
    \textit{Avg. Subs} denotes the average number of amino acid substitutions per mutant relative to wildtype.}
    \label{tab:dms_datasets}
    
    \begin{tabular}{lp{2.2cm}p{3cm}p{1.5cm}cccc}
        \toprule
        & & & & \multicolumn{2}{c}{\textbf{Chain Length}} & & \\
        \cmidrule(lr){5-6} 
        \textbf{\multirow{-2}{*}{Original Source}} & \textbf{\multirow{-2}{*}{Assay Type}} & \textbf{\multirow{-2}{*}{Dataset Name}} & \textbf{\multirow{-2}{1.5cm}{Mutated Chain}} & \textbf{Heavy} & \textbf{Light} & \textbf{\multirow{-2}{*}{Num. Seqs}} & \textbf{\multirow{-2}{*}{Avg. Subs}} \\
        \midrule

        \multirow{4}{2.5cm}{\citet{koenig2017mutational}} & \multirow{2}{*}{Expression} & Koenig Expression (H) & Heavy & 120 & 108 & 2261 & 1 \\
         & & Koenig Expression (L) & Light & 120 & 108 & 2014 & 1 \\
         \cmidrule(lr){2-8}
         & \multirow{2}{*}{Binding (VEGF)} & Koenig Binding (H) & Heavy & 120 & 108 & 2261 & 1 \\
         & & Koenig Binding (L) & Light & 120 & 108 & 2014 & 1 \\
         \midrule
        \multirow{1}{2.5cm}{\citet{adams2016measuring}} & \multirow{1}{*}{Expression} & Adams & Heavy & 117 & 112 & 2803 & 1.98 \\
        \midrule
        \multirow{2}{2.5cm}{\citet{shanehsazzadeh2023unlocking}} & \multirow{2}{*}{Binding (HER2)} & Shaneh. (119) & Heavy & 119 & 107 & 184 & 5.22 \\
         & & Shaneh. (120) & Heavy & 120 & 107 & 201 & 5.97 \\
         
        \bottomrule
    \end{tabular}
\end{table*}

We evaluate zero-shot VEP with six other baseline models, which can be divided into three categories based on their architectures and how they are evaluated.

\begin{enumerate}
    \item Masked language models (\textbf{AbLang-2, ESM-2 150M, ESM-2 650M}) are evaluated via pseudo-perplexity in accordance with the FLAb2 benchmark. 
    \item Autoregressive models (\textbf{ProGen2 Small, ProGen2 Medium}) are evaluated via perplexity, also in accordance with the FLAb2 benchmark.
    \item The \textbf{DASM} model is evaluated by summing its log selection factors as described in \citet{matsen2025separating}.
    \item The \textbf{PRISM} model is also evaluated via pseudo-perplexity. Following the protocol in \citet{Kim2026ExplicitRO}, we use the germline scoring mode for expression assays and the non-germline mode for binding.
\end{enumerate}

\subsubsection{Calculating Sequence Likelihoods with Thrifty SHM Model}
\label{sec:thrifty}

To estimate sequence transition likelihoods under SHM, we use \texttt{ThriftyHumV0.2-59-hc-tangshm}, which provides per-codon transition probabilities with a multihit correction described in \citet{matsen2025separating}. Since the Thrifty model operates in the state space of codons (64 states) and \ourmodel{} operates in the state space of amino acids (20 states), we must sum over all possible codons that could code for the observed alternate allele when calculating sequence likelihoods.

More formally, let $x_\ell$ denote the amino acid identity at position $\ell$ in the wildtype sequence for a DMS assay and $c(x_\ell)$ denote its underlying nucleotide codon (see \cref{subsec:dms-info} for details on how this is determined). Let $y_\ell$ denote the corresponding amino acid in the mutant sequence $y$ and $C(y_\ell)$ denote the set of possible codons that could code for $y_\ell$. We calculate the likelihood of sequence $y$ under Thrifty as 
\begin{align*}
    q(y\mid x,t) &= \prod_{\ell}^L q(y_\ell\mid x_\ell, t)\quad\text{where}\\
    q(y_\ell\mid x_\ell, t) &= 
    \begin{cases}
        q(c(x_\ell)\mid c(x_\ell), t) & \text{if } y_\ell = x_\ell \\
        \sum_{c\in C(y_\ell)} q(c \mid c(x_\ell), t) & \text{if } y_\ell \neq x_\ell .
    \end{cases}
\end{align*}

\subsection{Guided Affinity Maturation from Naive Antibodies}\label{apx:guided-affinity-mat}

\begin{wrapfigure}{r}{0.35\textwidth}
    \centering
    \vspace{-15pt}
    \includegraphics[width=\linewidth]{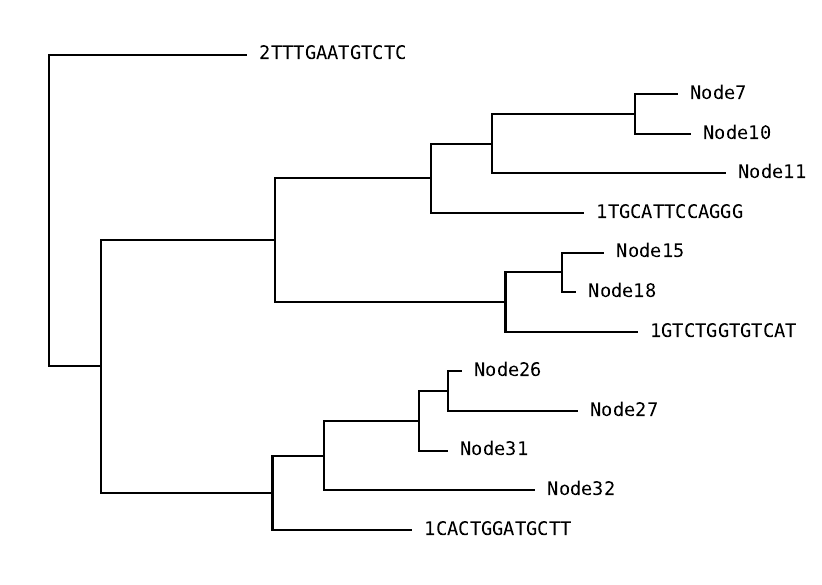}
    \caption{\textbf{Clonal Tree used for Guided Affinity Maturation Experiment in \cref{sec:guidance-from-naive}}}
    \label{fig:test-tree-main-text}
    \vspace{-5pt}
\end{wrapfigure}

We randomly sampled naive antibodies from a subset of the OAS database~\cite{olsen2022observed} containing only heavy chain sequences with the IgM isotype from human donors. To guide and score the sampled sequences, we utilized the SARS-CoV-1 and SARS-CoV-2 neutralization predictors from~\citeauthor{jin2021iterative}, with weights downloaded from \url{https://github.com/wengong-jin/RefineGNN}. These oracles are only trained on heavy chain sequences. To parameterize the variance $\sigma_{\theta_z}$, we used MC dropout~\cite{gal2016dropoutbayesianapproximationrepresenting} with 10 fixed masks. For the numerator in \cref{eq:tag-guidance}, we set the model to evaluation mode and computed the gradient of the output $\mu_{\theta_z}(x)$ with respect to the one-hot input sequence $x$. Known binders were curated from the CoV-AbDab database. We set the naive antibody at the root of the clonal tree in \cref{fig:test-tree-main-text} and recursively sampled down its nodes in breadth first order. We repeated this procedure 5 times for each guidance setting and collected the leaf sequences for comparison. The selected tree has 13 leaves, so a non-repetitive sampler would produce 65 unique leaf sequences.

\subsection{Guided Optimization of Antibody CDRs}\label{apx:cdr-optimization-protocol}

All methods optimize a SARS-CoV-1 binder from the CovAbDab database, with CDR positions identified via the IMGT numbering scheme~\cite{Ehrenmann2010IMGTDomainGapAlign}. The Greedy baseline (described below) is exempt from the oracle-call budget and serves only as an upper bound on achievable affinity. We compare \ourmodel{} against the following baselines:

\textbf{Genetic Algorithm (GA).} 
We evolved a population of 1,000 sequences over 9 generations. In each generation, the top 50\% of sequences were selected based on fitness, and the remaining 50\% were regenerated by applying random mutations to the selected parents. To respect the mutation budget without inefficient rejection sampling, we restrict changes to mutated positions if the budget of 5 mutations has been met. This allows for lateral moves or reversions while satisfying constraints.

\textbf{Product of Experts (ESM-2 and AbLang-1).} Following \citeauthor{gordon2024generativehumanizationtherapeuticantibodies}, we sampled variants using protein language models with a product of experts (PoE) strategy to incorporate oracle guidance. This method uses Gibbs sampling from a joint distribution $P(x) \propto P_{\text{MLM}}(x) \cdot \exp(\lambda F(x))$, where $P_{\text{MLM}}$ is a masked language model prior. We evaluated both a general protein language model (ESM-2 150M) and an antibody-specific model (AbLang-1). We used AbLang-1 instead of AbLang-2 because the latter strictly requires paired antibody sequences instead of a single VH chain. We set the guidance strength to $\lambda=50.0$. To respect the computational budget of $\le 5$ oracle calls per sample, we implemented a caching approximation. Before sampling, we pre-computed the fitness effects of all possible single mutations in CDRs. During the Gibbs sampling process, oracle scores were retrieved from this additive cache rather than re-evaluated using the mutated sequence. This approximates the fitness landscape as locally additive.

\textbf{Greedy Search.} 
As a performance upper bound, we implemented a stochastic hill-climbing strategy. At each step, the algorithm evaluates all possible single mutations ($L \times 19$ variants), selects the top $K=15$ candidates by fitness gain, and samples the next step using a softmax over their fitness improvements ($\Delta F/T$, with $T=1.0$). This process repeats for up to 5 steps. While effective, this method requires orders of magnitude more compute ($\sim$2700 oracle calls per sequence) and serves only as a reference for the maximum achievable affinity under the given constraints.

\textbf{Evaluation Metrics.}
Performance was evaluated across three axes: \textbf{Fitness}, measured by the mean and maximum improvement in predicted binding affinity; \textbf{Diversity}, quantified by the average pairwise distance within the generated samples; and \textbf{Naturalness}, measured using the OASis score.

\clearpage

\section{Supplementary Results}

\subsection{Estimation Error of $Q_{\theta}(x)_\ell$ via SNR-Weighted MLE}\label{sec:snr-weighted-loss}

Under the transition likelihood of the \ourmodel{} model in \cref{eq:cosine-transition-prob}, maximum likelihood inference of $Q_{\theta}(x)_\ell$ is performed by minimizing the negative log-likelihood of the observed evolutionary transitions
\begin{align*}
    \mathcal{L}(\theta) = -\sum_{\tau=(x,y,t)} \sum_{\ell=1}^L \log \exp(tQ_{\theta}(x)_\ell)_{x_\ell,y_\ell}
\end{align*}
Unfortunately, directly training with this objective will generally fail to satisfy the assumption in \cref{sec:prop1-main-text}, which relies on $(Q_{\theta}(x)_\ell)_{x_\ell,\cdot}$ approximating the instantaneous non-zero rates in $\mathbf{Q}_{x,\cdot}$. Indeed, on branches with large $t$, this factorized objective encourages $Q_{\theta}(x)_\ell$ to learn \textit{effective} rates that account for unobserved epistatic interactions and intermediate states between $x$ and $y$. This results in a time scale dependent bias where the inferred rates for long branches may diverge from $\mathbf{Q}$.

\begin{wrapfigure}{r}{0.4\textwidth}
    \centering
    \vspace{-10pt}
    \includegraphics[width=\linewidth]{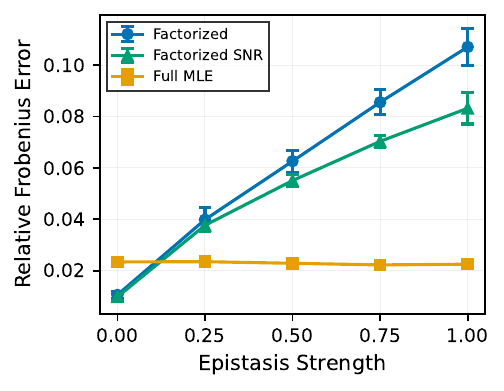}
    \caption{\textbf{As $\varepsilon$ increases, SNR-weighting (green) reduces the relative Frobenius norm error between the estimated and true rate matrices compared to unweighted MLE (blue), while the Full MLE (yellow) is constant.}}
    \label{fig:snr-reweighting-synth}
    \vspace{-25pt}
\end{wrapfigure}

To mitigate this issue, we sought to leverage the insight that the signal-to-noise (SNR) ratio of our first-order approximation scales as $O(1/t)$. We therefore proposed a SNR-weighted loss function:
$$
\mathcal{L}_{SNR}(\theta) = \sum_{\tau=(x,y,t)} \frac{1}{\delta+t} \mathcal{L}(\theta; x,y,t)
$$
where $\delta\in[0,1]$ is a hyper-parameter that controls the magnitude of the weighting. In the synthetic codon experiments (detailed in \cref{apx:codon-exp}), we were able to show that in a data-rich setting, SNR weighting indeed reduces the estimation error compared to an unweighted MLE objective, especially at high epistasis levels for the ground truth rate matrix (\cref{fig:snr-reweighting-synth}). In addition, we found that setting $\delta=1-\varepsilon$ yielded the strongest results, which is intuitive since re-weighting is meant to compensate for epistatic effects in the underlying process.

Motivated by this result, we trained \ourmodel{} on the full clonal dataset using this SNR-weighted objective at different values of $\delta\in\{0.0, 0.5, 1.0\}$. However, we could not identify improvements for these models in comparison to a \ourmodel{} model trained with the un-weighted loss. We hypothesize that this is due to a relatively data-poor setting for the antibody environment, given that our state space is much larger than the synthetic environment and we have less training data. 

\subsection{Sampling Error in Synthetic Codon Experiment}\label{apx:synthetic-sampling-error-more}

Using the synthetic codon environment detailed in \cref{apx:codon-exp}, we sought to quantify the difference between the sampling distributions of \ourmodel{} and the true transition probability distribution of the underlying process. For each level of epistasis $\varepsilon\in\{0.0, 0.25, 0.5, 0.75, 1.0\}$, we generated a ground truth rate matrix and trained a \ourmodel{} model on simulated transitions (see \cref{apx:codon-exp}). We evaluate performance by measuring the KL divergence between the ground truth transition distribution $P(y\mid x, t)$ and the distributions induced by either Gillespie sampling (\cref{algo:gillespie}) or the per-site matrix exponentiation approach (\cref{eq:cosine-transition-prob}).

We found that Gillespie sampling is superior at all epistasis levels and branch lengths. As expected, both sampling methods perform comparably when $\varepsilon=0$, since the ground truth rate matrix is also site-independent (\cref{fig:synth-eps-0}). However, at $\varepsilon=1.0$, notice that the KL divergence of the matrix exponential scales quadratically in $t$ before plateauing at a high error, indicating a failure to capture the correct stationary distribution (\cref{fig:synth-eps-1}). In contrast, Gillespie sampling maintains consistently lower divergence and recovers the correct stationary distribution. This is an important result since it empirically validates \cref{sec:prop1-main-text} and supports the claim in \cref{sec:lemma1-main-text}. As $\varepsilon$ increases, we notice that the advantage of Gillespie sampling over the factorized matrix exponential approach also becomes more significant.

\begin{figure*}[h!]
    \centering
    \begin{subfigure}{0.19\textwidth}
        \centering
        \includegraphics[width=\linewidth]{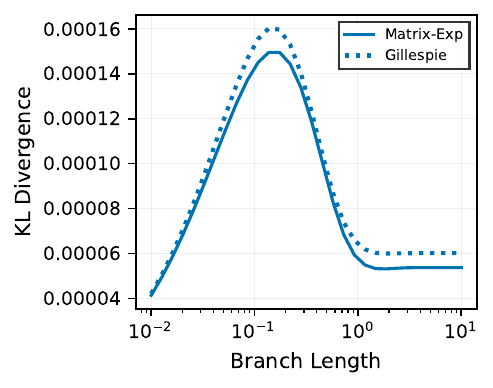}
        \caption{$\varepsilon=0.00$}
        \label{fig:synth-eps-0}
    \end{subfigure}
    \begin{subfigure}{0.19\textwidth}
        \centering
        \includegraphics[width=\linewidth]{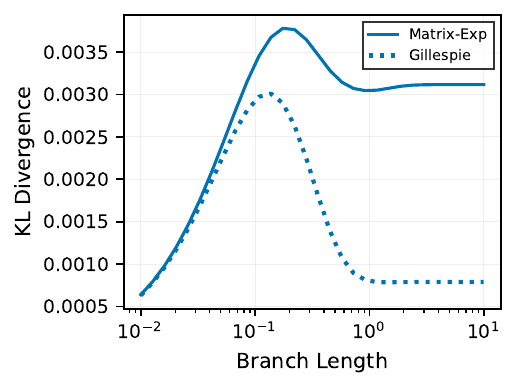}
        \caption{$\varepsilon=0.25$}
    \end{subfigure}
    \begin{subfigure}{0.19\textwidth}
        \centering
        \includegraphics[width=\linewidth]{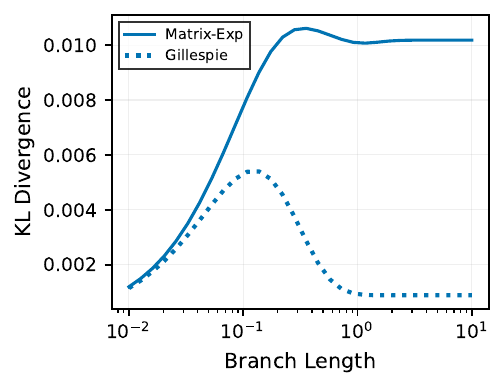}
        \caption{$\varepsilon=0.50$}
    \end{subfigure}
    \begin{subfigure}{0.19\textwidth}
        \centering
        \includegraphics[width=\linewidth]{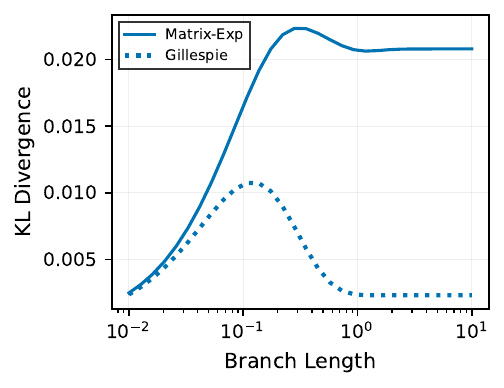}
        \caption{$\varepsilon=0.75$}
    \end{subfigure}
    \begin{subfigure}{0.19\textwidth}
        \centering
        \includegraphics[width=\linewidth]{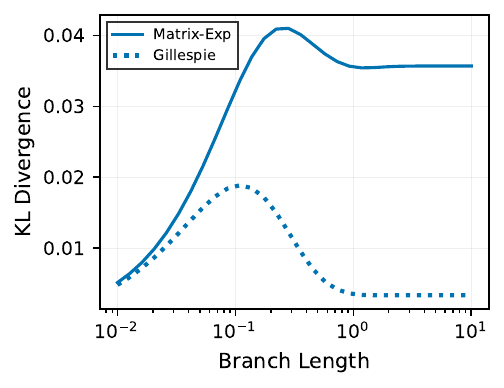}
        \caption{$\varepsilon=1.00$}
        \label{fig:synth-eps-1}
    \end{subfigure}
    \caption{\textbf{Gillespie samples are more consistent with the true transition probability distribution than samples from the factorized \ourmodel{} likelihood at all levels of epistasis.} The difference increases smoothly with the epistasis strength $\varepsilon$ in the underlying rate matrix. Ground truth matrix generation and estimation protocols are in \cref{apx:codon-exp}.}
    \label{fig:synthetic-sampling-error-more}
\end{figure*}

\subsection{Sampling Error in Real Antibody Experiment}\label{apx:gillespie-vs-mat-exp-antibody}

Although we cannot obtain the transition likelihood for Gillespie samples in the large antibody sequence state space, we sought to validate the synthetic codon results (\cref{apx:synthetic-sampling-error-more}) in the real antibody environment using auxiliary metrics. We selected all clonal trees in the test split with $\ge 4$ leaves and sampled new sequences for each tree using the model trained in \cref{sec:main-cosine-model}, conditional on the root sequence. For each leaf node in each tree, we collected the corresponding sampled sequence from both Gillespie and factorized matrix exponential approaches and compared their hamming distance to the real leaf sequence at that node. Assuming that a lower hamming distance to the real leaf sequence indicates lower sampling error, we observe in \cref{fig:hmd-to-real-child} that leaves simulated with Gillespie sampling are closer to the real leaf sequence in 52.0\% of cases, whereas the opposite is true only 38.8\% of the time. Next, we computed the hamming distance from the root sequence to every real and simulated leaf sequence. Under the assumption that this distance should be somewhat consistent for corresponding real and simulated leaves, we scatterplot their correlation in \cref{fig:hmd-to-real-parent-gillespie} for Gillespie samples and \cref{fig:hmd-to-real-parent-matexp} for factorized matrix exponential samples. Once again, we find that Gillespie has superior performance, achieving a Pearson correlation of $0.64$ while the per-site matrix exponentiation approach only obtains $0.56$. Altogether, these results further consolidate our results in the synthetic codon environment and strongly suggest that Gillespie sampling is a principled and superior sampling approach for \ourmodel{}.

\begin{figure}[h!]
    \centering
    \begin{subfigure}{0.33\textwidth}
        \centering
        \includegraphics[width=\linewidth]{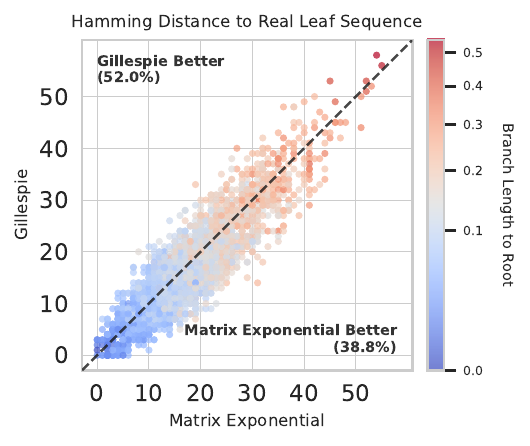}
        \caption{}
        \label{fig:hmd-to-real-child}
    \end{subfigure}
    \begin{subfigure}{0.33\textwidth}
        \centering
        \includegraphics[width=\linewidth]{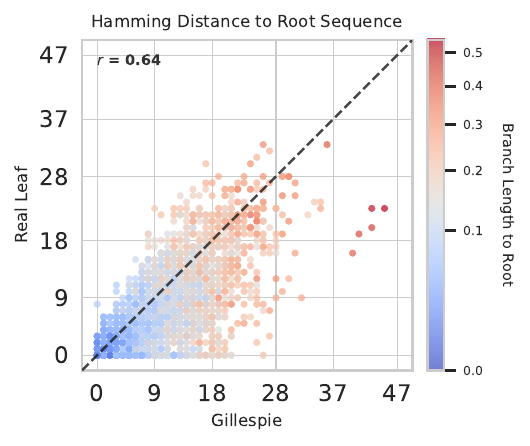}
        \caption{}
        \label{fig:hmd-to-real-parent-gillespie}
    \end{subfigure}
    \begin{subfigure}{0.33\textwidth}
        \centering
        \includegraphics[width=\linewidth]{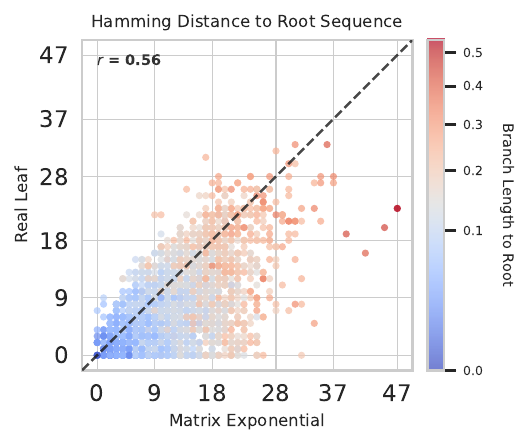}
        \caption{}
        \label{fig:hmd-to-real-parent-matexp}
    \end{subfigure}
    \caption{\textbf{Gillespie sampling exhibits lower sampling error and better preserves evolutionary distances on real antibody clonal families.} 
    a) Comparison of sampling fidelity: Gillespie samples are closer (in Hamming distance) to the true held-out leaf sequence in 52.0\% of cases, compared to 38.8\% for the factorized matrix exponential. 
    b, c) Correlation between the root-to-leaf Hamming distances of real versus simulated leaf sequences. Gillespie sampling (b) achieves a higher Pearson correlation ($r=0.62$) with the observed evolutionary distances than the factorized matrix exponential (c) approach ($r=0.53$).}
    \label{fig:gillespie-vs-matexp-real-model}
\end{figure}

\subsection{Per-Site Entropy Increases with Branch Length}

From a randomly selected parent sequence in the test set, we calculated the per-site entropy of the \ourmodel{} transition likelihood in \cref{eq:cosine-transition-prob} at different branch lengths (\cref{fig:entropy-per-site}). As expected, we found that the entropy increases with the branch length, indicating a larger number of expected mutations. Furthermore, we found that the entropy in CDR regions (red) tends to be higher than the entropy in framework regions (grey).

\begin{figure}[h!]
    \centering
    \begin{subfigure}{\textwidth}
        \centering
        \includegraphics[width=\linewidth]{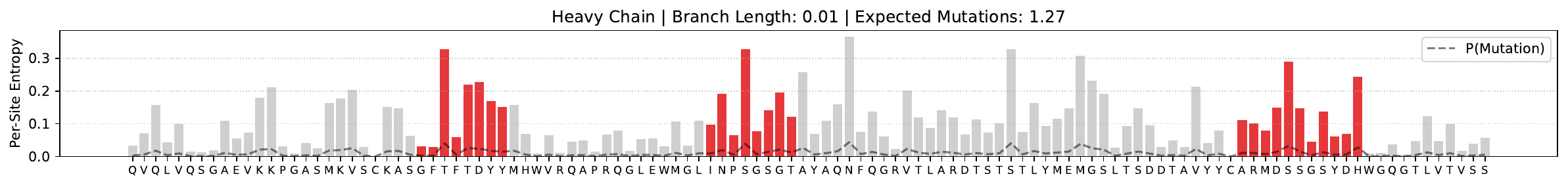}
    \end{subfigure}
    \begin{subfigure}{\textwidth}
        \centering
        \includegraphics[width=\linewidth]{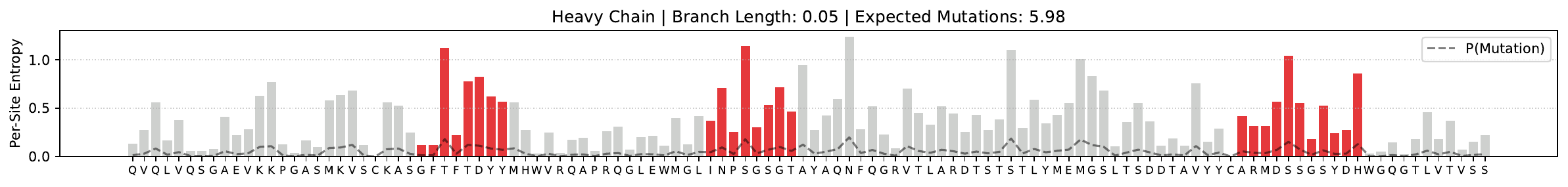}
    \end{subfigure}
    \begin{subfigure}{\textwidth}
        \centering
        \includegraphics[width=\linewidth]{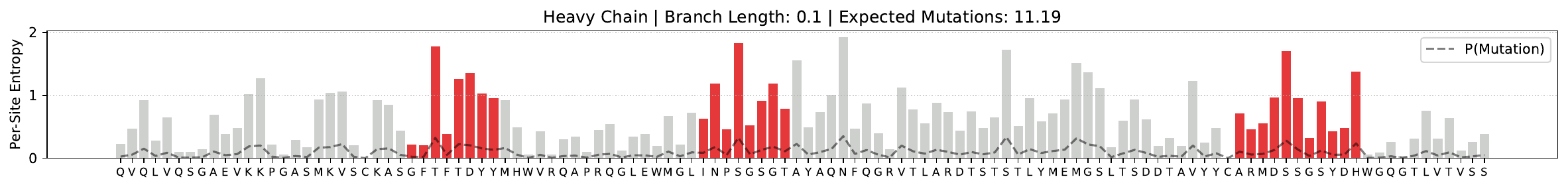}
    \end{subfigure}
    \caption{\textbf{Per-site entropy across increasing branch lengths.} Entropy profiles for branch lengths 0.01, 0.05, and 0.1 (top to bottom). Red bars denote CDRs. Entropy scales with branch length, showing higher mutational variability in CDRs.}
    \label{fig:entropy-per-site}
\end{figure}

\subsection{Ablation of Context Dependent Rate Estimation}\label{apx:cosine-vs-wag}

\begin{wraptable}{r}{0.4\textwidth}
    \vspace{-1em}
    \centering
    \caption{\textbf{Comparison of WAG and CoSiNE per-token NLL on test transitions by branch length.}}
    \label{tab:cosine-vs-wag-nll}
    \begin{tabular}{lccc}
        \toprule
        Branch Length & WAG NLL & CoSiNE NLL \\
        \midrule
        $[0, 0.01)$      & 0.0717 & \textbf{0.0663} \\
        $[0.01, 0.05)$   & 0.171  & \textbf{0.147}  \\
        $[0.05, 0.15)$   & 0.454  & \textbf{0.378}  \\
        $[0.15, \infty)$ & 0.836  & \textbf{0.676}  \\
        \bottomrule
    \end{tabular}
\end{wraptable}

We investigate the importance of context-dependence by comparing \ourmodel{} against a classical context-independent substitution model: a single 20×20 exchangeability matrix fit using the WAG likelihood form. Following the original WAG paper, we parameterized the rate matrix as time-reversible: $Q=S_\theta \cdot \text{diag}(\pi)$. We set the equilibrium frequencies ($\pi$) to the empirical amino acid frequencies in the training data and optimized the exchangeability matrix $S_\theta$ via maximum likelihood using the L-BFGS optimizer, monitoring validation NLL for early-stopping. We estimate these parameters via MLE on the same training transitions used for \ourmodel{}, ensuring a fair comparison. 

On our test set, \ourmodel{} achieves significantly lower NLL per token across all branch lengths, with the gap widening for longer branches, suggesting that sequence-context conditioning allows \ourmodel{} to capture epistatic interactions that accrue over long evolutionary distances.

\subsection{Evolution of General Protein Families with \ourmodel{}}

\begin{wraptable}{r}{0.3\textwidth}
    \vspace{-1em}
    \centering
    \caption{\textbf{Comparison of WAG, LG+G4, and \ourmodel{} per token NLL on held out transitions from TrRosetta.}}
    \label{tab:trrosetta-results}
    \begin{tabular}{lcc}
        \toprule
        Model & Val NLL & Test NLL \\
        \midrule
        WAG     & 1.170 & 1.200 \\
        LG+G4   & 1.139 & 1.168 \\
        CoSiNE  & \textbf{1.005} & \textbf{1.027} \\
        \bottomrule
    \end{tabular}
\end{wraptable}

To evaluate robustness to both longer branches and noisier tree reconstructions, we trained \ourmodel{} on the TrRosetta~\cite{Yang2019ImprovedPS} dataset ($\sim$15k general protein family MSAs). These families span far greater evolutionary distances than antibody clonal trees, and their phylogenies are reconstructed from sequence homology rather than direct lineage tracing, making them inherently noisier. We compare against two classical substitution models, using the WAG and LG+G4 likelihood forms to fit new exchangeability matrices from scratch on the same training data. \ourmodel{} achieves lower test per-token NLL than both baselines across all splits, indicating that context-dependent rate modeling reduces misspecification even under these more challenging conditions.

\subsection{Log-Likelihood versus Selection Score for DMS Performance}
\label{sec:visual_ss}


\Cref{fig:combined_correlation} illustrates the effect of our correction method on the Koenig Light Chain expression dataset. In the left plot, there is a clear separation in the data according to the number of nucleotide edits between the wildtype and mutant sequences. This indicates that the model has learned that for a fixed amount of elapsed time, transitions with fewer nucleotide edits are more probable than transitions with more nucleotide edits. This makes sense considering our chosen branch length is somewhat short ($t=0.2$) and the model is trained to maximize the likelihood of observed transitions. However, if we naively correlate the model's predicted likelihood for the sequence with selection, we introduce the bias that mutations with few nucleotide edits are higher fitness than mutations with more nucleotide edits, which is clearly erroneous. In the plot on the right, we can see that taking the ratio with the likelihood under the Thrifty model removes this edit distance bias as indicated by the fact that the point clouds corresponding to each edit distance now roughly occupy the same space.

The selection score correction does more than just correct for this edit distance bias. For example, certain nucleotide substitutions are more likely to occur than others and without accounting for this variation, those biases will be interpreted as improvements in fitness. In \cref{fig:combined_correlation} we see that even among mutants with the same nucleotide edit distance, the correlation improves when adding the SHM correction.

\begin{figure*}[h]
    \centering
    \includegraphics[width=\textwidth]{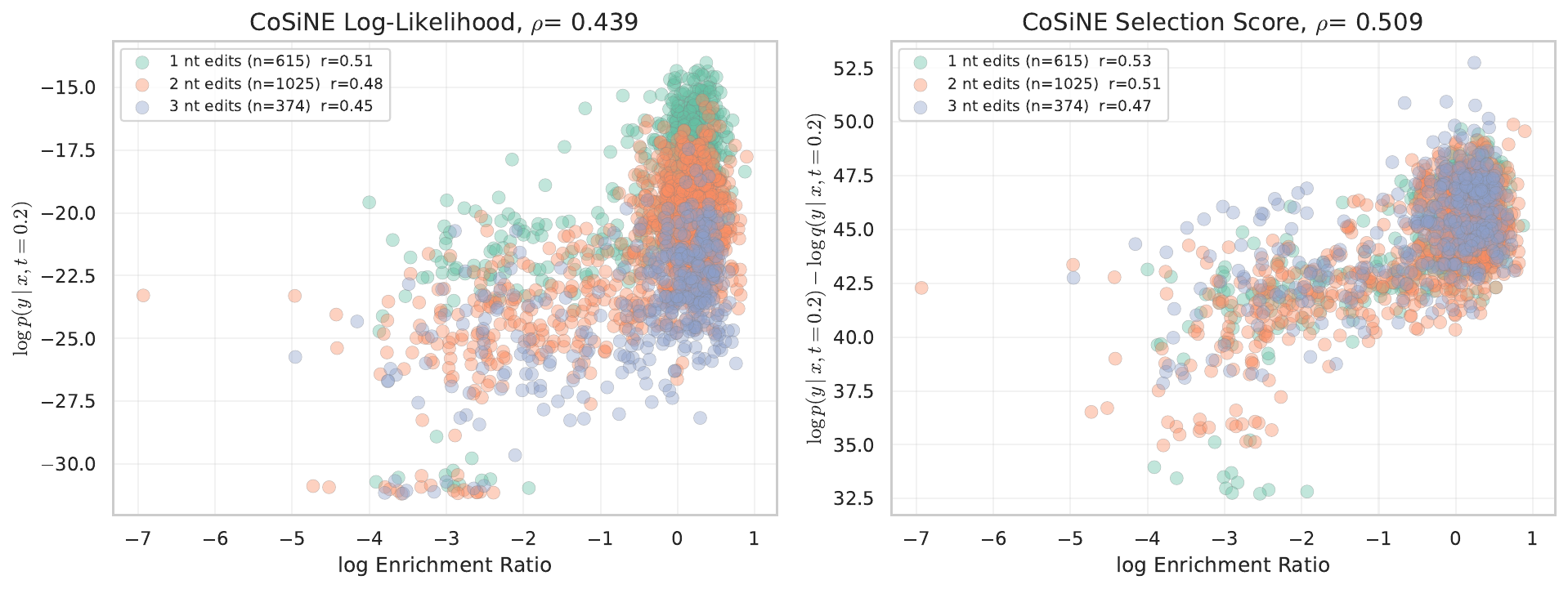}
    \caption{\textbf{Analysis of \ourmodel{} scoring methods against log Enrichment Ratios for the Koenig DMS assay measuring expression (Light Chain).} \textbf{Left}: Log-Likelihood performance ($\rho=0.439$). \textbf{Right}: Selection Score performance ($\rho=0.509$). Points are colored by the edit distance from wildtype codon to mutant codon. We assume the mutant codon is the one which requires the smallest number of nucleotide edits.}
    \label{fig:combined_correlation}
\end{figure*}

\subsection{Additional VEP Experiments}
\label{sec:single_chain_ablation}

\begin{wraptable}{r}{0.4\textwidth}
    \vspace{-1em}
    \centering
    \caption{\textbf{VEP Performance of \ourmodel{} against baselines on Petersen binding datasets (Spearman).}}
    \label{tab:petersen_results}
    \begin{tabular}{lcc}
        \toprule
        Model & \makecell{Petersen\\(319-345)} & \makecell{Petersen\\(222-1C06)} \\
        \midrule
        AbLang-2 & 0.199 & $\phantom{+}0.060$ \\
        DASM & \underline{0.395} & $\phantom{+}\underline{0.286}$ \\
        PRISM & 0.312 & $-0.073$ \\
        ESM2-150M & 0.250 & $-0.139$ \\
        ESM2-650M & 0.278 & $\phantom{+}0.013$ \\
        ProGen2-Small & 0.329 & $\phantom{+}0.024$ \\
        ProGen2-Medium & 0.294 & $\phantom{+}0.036$ \\
        \ourmodel{} ($t=0.2$) & \textbf{0.504} & $\phantom{+}\textbf{0.328}$ \\
        \bottomrule
    \end{tabular}
    \vspace{-1em}
\end{wraptable}

To strengthen our claims of strong VEP performance, we evaluate \ourmodel{} on two additional VEP datasets from \cite{Petersen2024AnIT}. We chose these assays because they use distinct antibodies, antigens, and experimental technologies (MAGMA-seq) from those in \cref{tab:vep-full-width}. As seen in \cref{tab:petersen_results}, \ourmodel{} outperforms all baselines, consistent with previous results.

In \cref{tab:single-vs-paired} we evaluate \ourmodel{} on the same datasets as \cref{sec:vep-results} when conditioning on different amounts of context. Surprisingly, conditioning on just a single chain results in a higher correlation on four out of seven datasets. However, when paired conditioning outperforms, it does so by an average Spearman correlation of \textbf{0.121}, compared to just \textbf{0.006} when single chain conditioning outperforms. This suggests that inter-chain interactions are crucial for modeling fitness but only under certain contexts. What differentiates these contexts is a direction for future work. It's also worth noting that the majority of our training data consists of unpaired single-chain sequences, which likely limits the model's ability to fully leverage paired context. We expect that as more paired heavy-light sequencing data becomes available, the benefits of cross-chain conditioning will become more consistent.



\begin{table*}[h]
\caption{\textbf{Comparison of \ourmodel{} on zero-shot VEP with both the heavy and light chains provided as context (Paired) versus just the chain with the mutations (Single).} \textcolor{blue}{Blue} indicates datasets with mutations on the heavy chain, and \textcolor{red}{red} indicates mutations on the light chain.}
\label{tab:single-vs-paired}
\centering
\begin{small}
\begin{sc}
\begin{tabularx}{\textwidth}{l @{\extracolsep{\fill}} ccc cccc}
\toprule
\multirow{2}{*}{Model} & \multicolumn{3}{c}{Expression} & \multicolumn{4}{c}{Binding} \\
\cmidrule(r){2-4} \cmidrule(l){5-8}
& \textcolor{blue}{Koenig (H)} & \textcolor{red}{Koenig (L)} & \textcolor{blue}{Adams} & \textcolor{blue}{Koenig (H)} & \textcolor{red}{Koenig (L)} & \textcolor{blue}{Shaneh. (119)} & \textcolor{blue}{Shaneh. (120)} \\
\midrule
\ourmodel{}-Paired & \textbf{0.613} & 0.508 & \textbf{0.464} & \textbf{0.456} & 0.371 & 0.498 & 0.536 \\
\ourmodel{}-Single & 0.545 & \textbf{0.509} & 0.234 & 0.390 & \textbf{0.375} & \textbf{0.504} & \textbf{0.549} \\
\bottomrule
\end{tabularx}
\end{sc}
\end{small}
\end{table*}

In \cref{tab:vep-pearson}, we report results from the same experiment as \cref{tab:vep-full-width}, but we use Pearson correlation instead of Spearman.

\begin{table*}[ht!]
\caption{\textbf{Comparison of deep protein models on VEP benchmarks across expression and binding landscapes as measured by Pearson correlation.}}
\label{tab:vep-pearson}
\centering
\begin{small}
\begin{sc}
\begin{tabularx}{\textwidth}{l @{\extracolsep{\fill}} ccc cccc}
\toprule
\multirow{2}{*}{Model} & \multicolumn{3}{c}{Expression} & \multicolumn{4}{c}{Binding} \\
\cmidrule(r){2-4} \cmidrule(l){5-8}
& Koenig (H) & Koenig (L) & Adams & Koenig (H) & Koenig (L) & Shaneh. (119) & Shaneh. (120) \\
\midrule
AbLang-2 & $\phantom{+}0.153$ & $-0.109$ & $-0.096$ & $-0.114$ & $-0.108$  & 0.263 & 0.166 \\
DASM & $\phantom{+}\textbf{0.688}$ & $\phantom{+}\underline{0.674}$ & $\phantom{+}0.221$ & $\phantom{+}\underline{0.335}$ & $\phantom{+}\underline{0.316}$ & \underline{0.458} & \underline{0.518} \\
PRISM & $\phantom{+}0.055$ & $\phantom{+}0.145$ & $\phantom{+}\underline{0.243}$ & $-0.001$ & $\phantom{+}0.000$ & 0.346 & 0.251 \\
ESM2-150M & $\phantom{+}0.476$ & $\phantom{+}0.539$ & $-0.119$ & $\phantom{+}0.044$ & $\phantom{+}0.266$ & 0.215 & 0.197 \\
ESM2-650M & $\phantom{+}0.384$ & $\phantom{+}0.416$ & $\phantom{+}0.097$ & $\phantom{+}0.009$ & $\phantom{+}0.243$ & 0.191 & 0.308 \\
ProGen2-Small & $\phantom{+}0.559$ & $\phantom{+}0.568$ & $-0.043$ & $\phantom{+}0.156$ & $\phantom{+}0.276$ & 0.074 & 0.052 \\
ProGen2-Medium & $\phantom{+}0.553$ & $\phantom{+}0.579$ & $\phantom{+}0.209$ & $\phantom{+}0.123$ & $\phantom{+}0.253$ & 0.296 & 0.275 \\
\ourmodel{} (ours) & $\phantom{+}\underline{0.687}$ & $\phantom{+}\textbf{0.696}$ & $\phantom{+}\mathbf{0.409}$ & $\phantom{+}\mathbf{0.367}$ & $\phantom{+}\textbf{0.345}$ & \textbf{0.502} & \textbf{0.521} \\
\bottomrule
\end{tabularx}
\end{sc}
\end{small}
\end{table*}

\subsection{Sensitivity of VEP to the Choice of Branch Length}\label{apx:ablation-of-bl}

We investigate the robustness of \ourmodel{} to the choice of the branch length hyperparameter, $t$, for variant effect prediction (VEP) tasks. In Figure \ref{fig:vep-by-t} we evaluate \ourmodel{} across the datasets in \cref{sec:vep-results} using six branch length values ranging from $t=0.01$ to $t=10$, and Table \ref{tab:vep-robustness} reports the underlying values.

\begin{figure*}[ht!]
\centering
\includegraphics[width=1.0\textwidth]{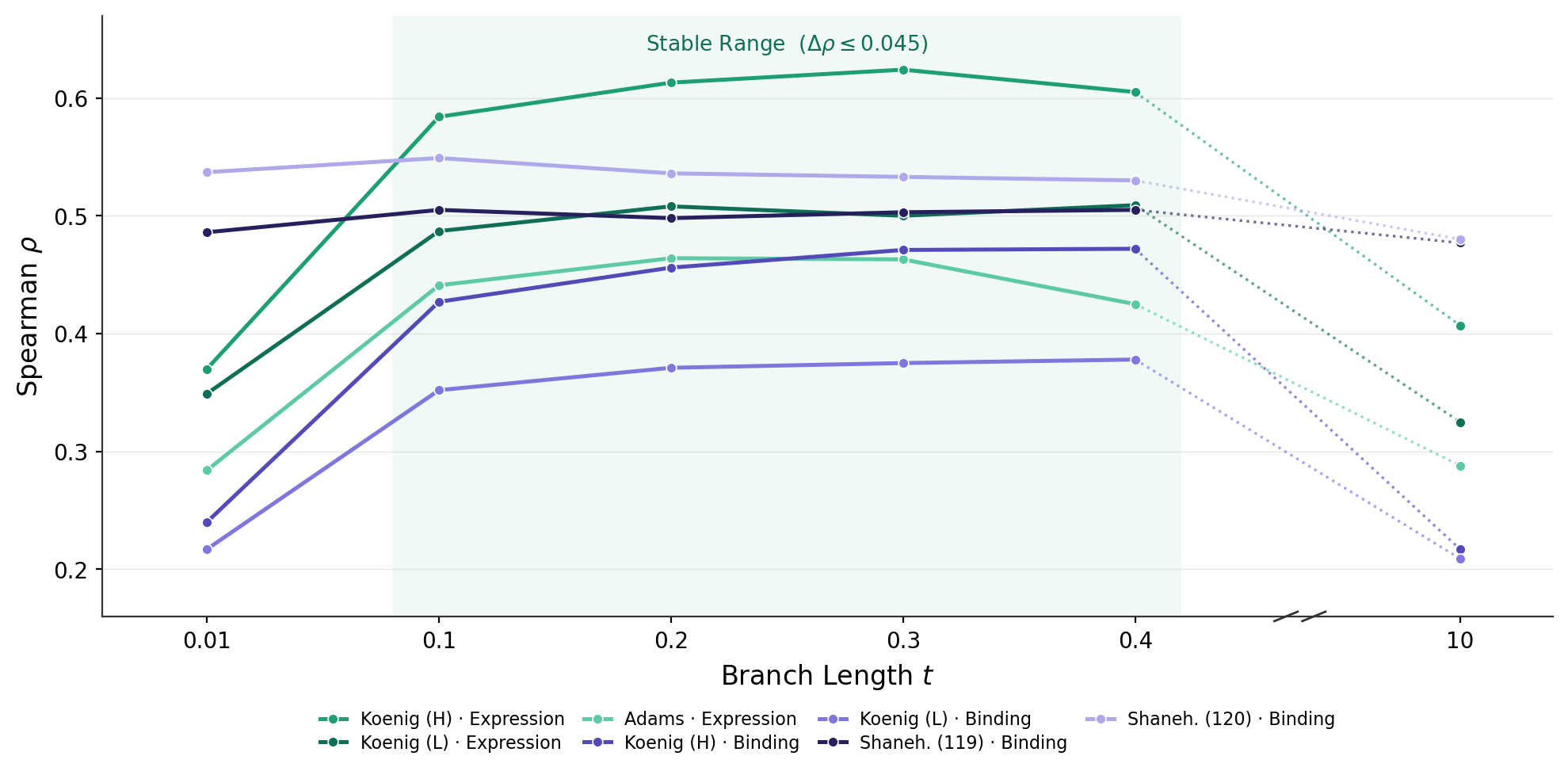}
\caption{\textbf{Spearman correlation $\rho$ between the \ourmodel{} selection score and experimental
fitness evaluated over branch lengths $\bm{t \in \{0.01, 0.1, 0.2, 0.3, 0.4, 10\}}$.} Expression assays are shown in green and binding assays are shown in purple.}
\label{fig:vep-by-t}
\end{figure*}

\begin{table*}[ht!]
\caption{\textbf{Underlying Spearman correlation values for \cref{fig:vep-by-t}}. \ourmodel{} results are \textbf{bolded} where they tie or exceed all non-\ourmodel{} baselines (\cref{tab:vep-full-width}). }
\label{tab:vep-robustness}
\centering
\begin{small}
\begin{sc}
\begin{tabularx}{\textwidth}{l @{\extracolsep{\fill}} ccc cccc}
\toprule
\multirow{2}{*}{Model} & \multicolumn{3}{c}{Expression} & \multicolumn{4}{c}{Binding} \\
\cmidrule(r){2-4} \cmidrule(l){5-8}
& Koenig (H) & Koenig (L) & Adams & Koenig (H) & Koenig (L) & Shaneh. (119) & Shaneh. (120) \\
\midrule
\ourmodel{} ($t=0.01$) & 0.370 & 0.349 & 0.284 & 0.240 & 0.217 & \textbf{0.486} & \textbf{0.537} \\
\ourmodel{} ($t=0.1$) & 0.584 & 0.487 & \textbf{0.441} & \textbf{0.427} & \textbf{0.352} & \textbf{0.505} & \textbf{0.549} \\
\ourmodel{} ($t=0.2$) & \textbf{0.613} & 0.508 & \textbf{0.464} & \textbf{0.456} & \textbf{0.371} & \textbf{0.498} & \textbf{0.536} \\
\ourmodel{} ($t=0.3$) & \textbf{0.624} & 0.500 & \textbf{0.463} & \textbf{0.471} & \textbf{0.375} & \textbf{0.503} & 0.533 \\
\ourmodel{} ($t=0.4$) & \textbf{0.605} & 0.509 & \textbf{0.425} & \textbf{0.472} & \textbf{0.378} & \textbf{0.505} & 0.530 \\
\ourmodel{} ($t=10$) & 0.407 & 0.325 & 0.288 & 0.217 & 0.209 & \textbf{0.477} & 0.480 \\
\bottomrule
\end{tabularx}
\end{sc}
\end{small}
\end{table*}

The selection score derived from \ourmodel{} demonstrates strong reliability across a relatively broad range from $t=0.1$ to $t=0.4$, where the maximum difference in correlation ($\Delta\rho$) across settings is only $0.045$. Within this optimal range, \ourmodel{} outperforms all baseline models on at least five of the seven assays and achieves second-best performance on the remaining two. As expected, performance degrades at the extremes: at very small branch lengths ($t=0.01$) the transition signal is dominated by noise, while at very large branch lengths ($t=10$) the quadratic error term (Proposition \cref{sec:prop1-main-text}) grows and the factorized likelihood diverges from the true process. Nonetheless, the correlations remain significantly positive.


While performance could theoretically be maximized by calibrating the choice of $t$ per assay, we utilize a fixed value of $t=0.2$ across all main experiments. This decision prevents unfair fitting to the VEP benchmarks, as real-world applications of zero-shot prediction lack access to ground-truth fitness values for hyperparameter tuning.

\subsection{Ablation of ESM2 Backbone}\label{apx:ablation-of-esm2}

To determine the source of \ourmodel{}'s performance gain and to assess the specific impact of the pretrained backbone, we performed an ablation study on the ESM2 initialization. We trained a version of \ourmodel{} entirely from scratch (comprising 8 million parameters with random weight initialization) using the same training dataset. This setup allows us to isolate the predictive power derived from the pretrained ESM2-150 component versus the evolutionary training objective itself on VEP.

\begin{table*}[ht!]
\caption{\textbf{Ablation of the pretrained ESM2 backbone on VEP benchmarks.} Spearman correlation ($\rho$) is reported across expression and binding datasets. Best performing models are shown in \textbf{bold}; second-best are \underline{underlined}.}
\label{tab:ablation_esm2}
\centering
\begin{small}
\begin{sc}
\begin{tabularx}{\textwidth}{l @{\extracolsep{\fill}} ccc cccc}
\toprule
\multirow{2}{*}{Model} & \multicolumn{3}{c}{Expression} & \multicolumn{4}{c}{Binding} \\
\cmidrule(r){2-4} \cmidrule(l){5-8}
& Koenig (H) & Koenig (L) & Adams & Koenig (H) & Koenig (L) & Shaneh. (119) & Shaneh. (120) \\
\midrule
DASM & \underline{0.596} & 0.474 & 0.270 & \underline{0.415} & 0.327 & 0.450 & \textbf{0.536} \\
ESM2-650M & 0.326 & 0.429 & 0.124 & 0.063 & 0.265 & 0.227 & 0.360 \\
ProGen2-Small & 0.407 & \textbf{0.513} & -0.024 & 0.098 & \underline{0.332} & 0.119 & 0.070 \\
\midrule
\ourmodel{}-ESM2 & \textbf{0.613} & \underline{0.508} & \textbf{0.464} & \textbf{0.456} & \textbf{0.371} & \underline{0.498} & \textbf{0.536} \\
\ourmodel{}-8M & 0.503 & 0.474 & \underline{0.447} & 0.363 & 0.330 & \textbf{0.508} & \underline{0.531} \\
\bottomrule
\end{tabularx}
\end{sc}
\end{small}
\end{table*}

As shown in \cref{tab:ablation_esm2}, the ESM2 backbone improves \ourmodel{}'s selection score correlation with fitness on all but one dataset. However, the from-scratch 8M version still performs exceptionally well: the average difference in correlation ($\Delta\rho$) between the ESM2-initialized version and the 8M version is only $0.041$. This suggests that while the pretrained ESM2 backbone provides a highly beneficial initialization, it is not the primary source of predictive power.

\subsection{Taylor-Series Approximated Guidance versus Exact Guidance}\label{apx:tag-vs-exact-guidance-time}

To evaluate the utility of our first-order Taylor series approximation for oracle guidance, we compared the fitness improvements and computational costs of exact guidance versus Taylor series approximated guidance. Using the same SARS-CoV-1 oracle as before (\cref{apx:guided-affinity-mat}), we sampled sequences at $t=\{0.01, 0.05, 0.10\}$ with guidance strength $\gamma = 2.0$. We generated 5 samples from each of 3 randomly selected seed antibody sequences, yielding 15 samples per condition.

As shown in Figure~\ref{fig:tag-vs-exact-guidance} (left), both guidance methods produced similar fitness improvements across all branch lengths. Two-sided $t$-tests confirmed that these differences were not statistically significant at any branch length ($p = 0.571$, $p = 0.918$, and $p = 0.695$), indicating that Taylor approximation does not compromise the quality of \textit{Guided Gillespie} samples.

In contrast, the computational costs differed dramatically between methods (Figure~\ref{fig:tag-vs-exact-guidance}, right). Exact guidance requires evaluating the oracle on all single-amino-acid mutants at each Gillespie step, compared to a single call for TAG guidance. This achieves speedups of 488× (0.01), 928× (0.05), and 916× (0.10) across the three branch lengths.

\begin{figure}[h!]
    \centering
    \begin{subfigure}{0.49\linewidth}
        \centering
        \includegraphics[width=\linewidth]{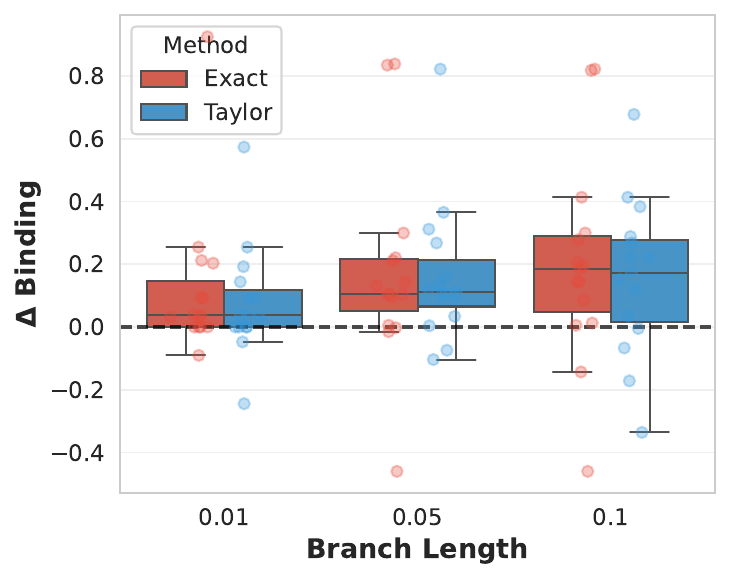}
    \end{subfigure}
    \begin{subfigure}{0.49\linewidth}
        \centering
        \includegraphics[width=\linewidth]{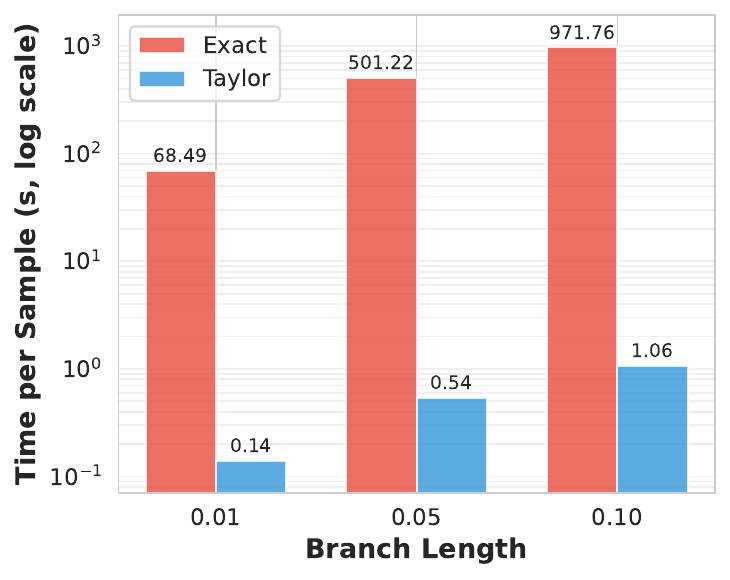}
    \end{subfigure}
    \caption{\textbf{TAG guidance matches exact guidance performance with 500–900× speedup}. (Left) Fitness improvements and (right) runtime comparison across branch lengths. No significant fitness differences ($p > 0.05$), but TAG is orders of magnitude faster.}
    \label{fig:tag-vs-exact-guidance}
\end{figure}

\subsection{Runtime Analysis of \textit{Guided Gillespie}}\label{apx:runtime-scaling-tag}

In this section, we analyze the computational cost of \textit{Guided Gillespie} sampling, as seen in \cref{eq:tag-guidance}, and its scaling properties with respect to the sequence length $L$ and the number of sampling steps $N$.

Empirically, the runtime scales linearly with both sequence length and the number of sampling steps. \Cref{tab:runtime_analysis} illustrates these scaling properties. When the number of sampling steps is fixed ($N=50$), the guided runtime increases linearly with $L$ due to the computational cost of the predictor's forward pass. In contrast, the unguided runtime remains constant across lengths because it makes no calls to the predictor. Conversely, when the sequence length is fixed ($L=100$), both guided and unguided runtimes scale linearly with $N$.

\begin{table}[htbp]
    \centering
    \caption{\textbf{Runtime analysis of Guided vs. Unguided Gillespie sampling.} \textbf{Left:} Runtime scaling by sequence length $L$ with a fixed $N=50$ sampling steps. \textbf{Right:} Runtime scaling by sampling steps $N$ with a fixed sequence length of $L=100$.}
    \label{tab:runtime_analysis}
    \begin{small}
    \begin{minipage}{0.45\textwidth}
        \raggedleft
        \begin{tabular}{lcc}
            \toprule
            \makecell{Sequence\\Length ($L$)} & \makecell{Guided\\Runtime (s)} & \makecell{Unguided\\Runtime (s)} \\
            \midrule
            $50$  & $5.16$  & $3.54$ \\
            $100$ & $5.94$  & $3.51$ \\
            $250$ & $8.53$  & $3.51$ \\
            $500$ & $12.20$ & $3.65$ \\
            \bottomrule
        \end{tabular}
    \end{minipage}\hfill
    \begin{minipage}{0.45\textwidth}
        \begin{tabular}{lcc}
            \toprule
            \makecell{Sampling\\Steps ($N$)} & \makecell{Guided\\Runtime (s)} & \makecell{Unguided\\Runtime (s)} \\
            \midrule
            $10$  & $1.22$  & $0.73$ \\
            $25$  & $3.07$  & $1.76$ \\
            $50$  & $6.14$  & $3.50$ \\
            $100$ & $12.20$ & $6.96$ \\
            \bottomrule
        \end{tabular}
    \end{minipage}
    \end{small}
\end{table}

\subsection{Guided Affinity Maturation from Additional Naive Antibodies}\label{apx:more-aff-mat-guided}

We provide additional results for the guided affinity maturation sampling experiment using other naive sequences from the OAS database. Notice that in all cases, guidance effectively steers the predicted binding affinity of the generated leaf sequences.

\begin{figure*}[p]
    \centering
    \begin{subfigure}{0.4\linewidth}
        \centering
        \includegraphics[width=\linewidth]{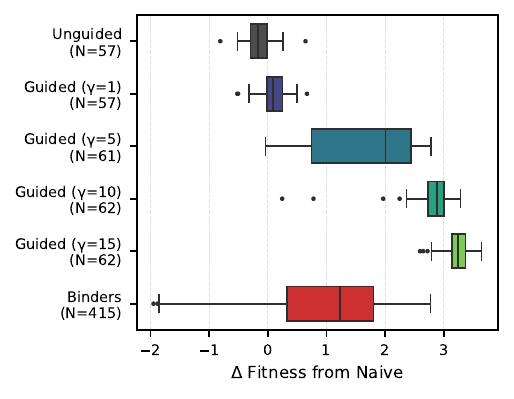}
    \end{subfigure}
    \hspace{2em}
    \begin{subfigure}{0.4\linewidth}
        \centering
        \includegraphics[width=\linewidth]{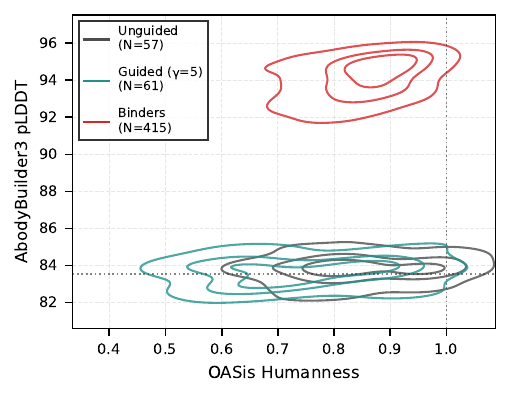}
    \end{subfigure}
    \begin{subfigure}{0.4\linewidth}
        \centering
        \includegraphics[width=\linewidth]{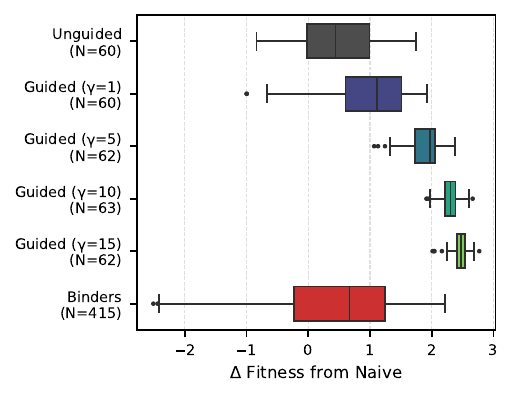}
    \end{subfigure}
    \hspace{2em}
    \begin{subfigure}{0.4\linewidth}
        \centering
        \includegraphics[width=\linewidth]{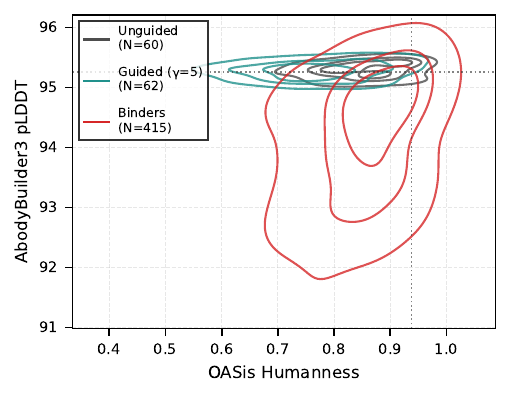}
    \end{subfigure}
    \begin{subfigure}{0.4\linewidth}
        \centering
        \includegraphics[width=\linewidth]{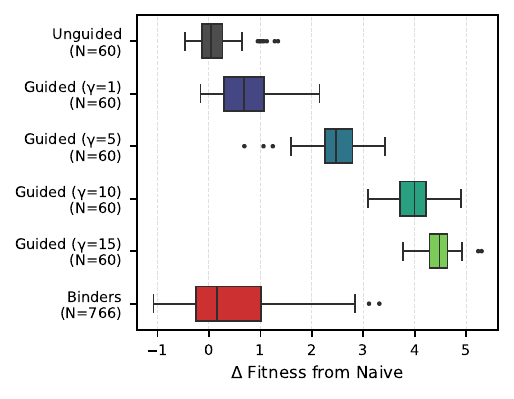}
    \end{subfigure}
    \hspace{2em}
    \begin{subfigure}{0.4\linewidth}
        \centering
        \includegraphics[width=\linewidth]{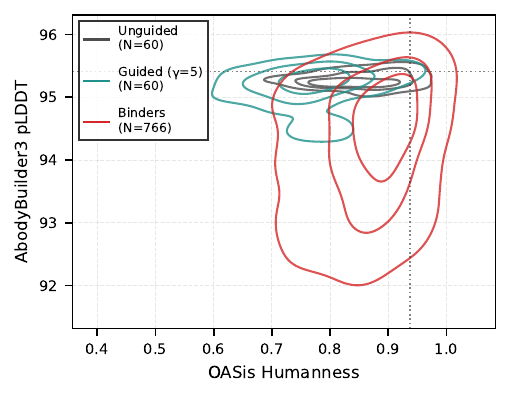}
    \end{subfigure}
    \begin{subfigure}{0.4\linewidth}
        \centering
        \includegraphics[width=\linewidth]{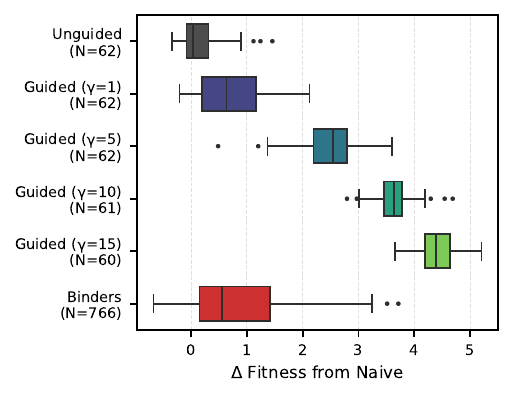}
    \end{subfigure}
    \hspace{2em}
    \begin{subfigure}{0.4\linewidth}
        \centering
        \includegraphics[width=\linewidth]{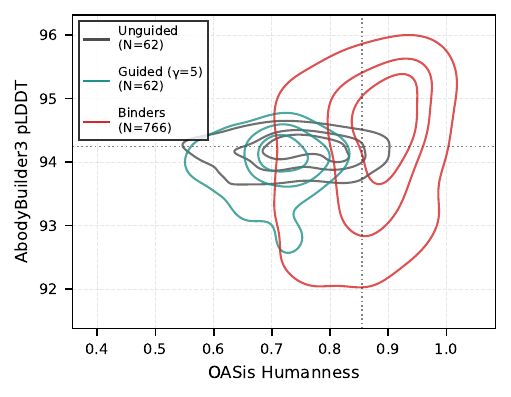}
    \end{subfigure}
    \caption{\textbf{Guided affinity maturation results for additional naive antibody sequences using CoV-1 (top) and CoV-2 (bottom) oracles.} We randomly selected naive IgM heavy sequences from the OAS database and recursively sampled down the tree in \cref{fig:test-tree-main-text}.}
    \label{fig:tree-guidance-naive-oas-more}
\end{figure*}

\clearpage
\section{Theoretical Results}\label{apx:proofs}

\subsection{First-Order Approximation of Sequential Point Mutation Process}\label{apx:prop1-proof}

\begin{proposition}{\textbf{(Proof of \cref{sec:prop1-main-text})}}

Assume the per-site rate matrices $Q_\theta(x)_\ell$ are parameterized such that $$\left(Q_\theta(x)_\ell\right)_{x_\ell, y_\ell} = \mathbf{Q}_{x,y}$$for all $x,y$ such that Hamming distance $d(x, y) = 1$ and $\ell$ is the unique site where $x$ and $y$ differ. Then, the error between the transition probability vectors is bounded such that
\begin{equation}\|P(\cdot\mid x, t) - p_\theta(\cdot\mid x, t)\|_1 \leq (\lambda t)^2 = O(t^2)\end{equation}where $\lambda$ is the maximum exit rate of $\mathbf{Q}$.
\end{proposition}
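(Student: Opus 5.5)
The plan is to compare the two transition laws through a coupling of continuous-time Markov chains started at $x$. The difference between them can only arise once a second mutation has occurred. First, I will realise $p_\theta(\cdot\mid x,t)$ as the time-$t$ law of a CTMC on the full sequence space $\mathcal{A}^L$. The per-site matrices $Q_\theta(x)_\ell$ are all frozen at the parent $x$. The product $\prod_\ell \exp(tQ_\theta(x)_\ell)_{x_\ell,y_\ell}$ is therefore exactly $\exp(t\tilde{\mathbf{Q}})_{x,y}$, where
\begin{equation*}
\tilde{\mathbf{Q}}=\sum_{\ell=1}^L I\otimes\cdots\otimes Q_\theta(x)_\ell\otimes\cdots\otimes I .
\end{equation*}
This holds because the summands commute, so the exponential factorises into a Kronecker product. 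The generator $\tilde{\mathbf{Q}}$ is also a sequential point mutation generator: its off-diagonal entries are nonzero only at Hamming distance $1$.

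The key observation concerns row $x$. By the hypothesis of \cref{sec:prop1-main-text}, $\tilde{\mathbf{Q}}_{x,y}=(Q_\theta(x)_\ell)_{x_\ell,y_\ell}=\mathbf{Q}_{x,y}$ for every $y$ with $d(x,y)=1$. The diagonal entries at $x$ then agree as negative row sums, so row $x$ of $\tilde{\mathbf{Q}}$ equals row $x$ of $\mathbf{Q}$. Other rows may differ, since $\tilde{\mathbf{Q}}$ keeps using the rates computed at $x$.

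Next I build the coupling. Let $X$ run with generator $\mathbf{Q}$ and $\tilde X$ with $\tilde{\mathbf{Q}}$, both started at $x$. The holding time at $x$ and the jump distribution out of $x$ are determined by row $x$ alone. I can therefore use the same exponential holding time $\tau_1\sim\mathrm{Exp}(-\mathbf{Q}_{x,x})$ and the same first jump destination for both chains, and run them independently afterwards. On the event that neither chain makes a second jump before time $t$, we have $X_t=\tilde X_t$. The coupling inequality then gives
\begin{equation*}
\|P(\cdot\mid x,t)-p_\theta(\cdot\mid x,t)\|_1 = 2\,\mathrm{TV}\le 2\,\mathbb{P}(X_t\neq\tilde X_t)\le \mathbb{P}(N_t\ge 2)+\tilde{\mathbb{P}}(\tilde N_t\ge 2),
\end{equation*}
where $N_t$ and $\tilde N_t$ count jumps in $[0,t]$. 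For the last step I plan to bound $\sum_y|P(y)-p_\theta(y)|$ directly rather than pass through TV. Write each law as its mass on the paths with at most one jump, which is identical under the coupling, plus a remainder. Each remainder has total mass at most the probability of two or more jumps, so the $\ell_1$ distance is at most the sum of the two remainder masses. This avoids the factor $2$.

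Finally I bound each $\mathbb{P}(N_t\ge2)$. If every exit rate is at most $\lambda$, uniformisation shows that $N_t$ is stochastically dominated by a Poisson$(\lambda t)$ variable. Hence $\mathbb{P}(N_t\ge 2)\le 1-e^{-\lambda t}(1+\lambda t)\le(\lambda t)^2/2$, using $e^{-u}(1+u)\ge 1-u^2/2$. Summing the two contributions gives $(\lambda t)^2$.

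The main obstacle is that the bound needs the exit rates of $\tilde{\mathbf{Q}}$ to be at most $\lambda$ as well. The exit rate of $\tilde{\mathbf{Q}}$ at $y\ne x$ is $\sum_\ell -(Q_\theta(x)_\ell)_{y_\ell,y_\ell}$, which the hypothesis does not directly control. I would first look for a sharper argument that only needs the rate of a second jump out of a state $y$ one mutation away from $x$. If that fails, I would state explicitly that $\lambda$ is the maximum exit rate over both generators, which is harmless for the $O(t^2)$ conclusion. In the worst case I would fall back on the cruder series bound $\|e^{tA}-I-tA\|_{\infty}\le e^{2\lambda' t}-1-2\lambda' t$, applied row-wise, which still gives $O(t^2)$.
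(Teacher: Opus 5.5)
\textbf{The step that fails is the one meant to remove the factor $2$.} Under your coupling the two laws do \emph{not} put the same mass on paths with at most one jump in $[0,t]$.

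Write $q_z=-\mathbf{Q}_{z,z}$ and $\tilde q_z=-\tilde{\mathbf{Q}}_{z,z}$. For $y\neq x$ you have $\mathbb{P}(X_t=y,\,N_t\le 1)=\int_0^t \mathbf{Q}_{x,y}\,e^{-q_x s}\,e^{-q_y(t-s)}\,ds$, while the $\tilde X$ analogue has $\tilde q_y$ in place of $q_y$. Sharing the first holding time and first destination does not control the survival factor at $y$, and $q_y\neq\tilde q_y$ in general. What your coupling actually yields is $\|P-p_\theta\|_1\le 2\,\mathbb{P}(\{N_t\ge 2\}\cup\{\tilde N_t\ge 2\})\le 2(\lambda t)^2$, which is off by a factor $2$ from the statement.

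The repair is to count rings of a \emph{common} uniformizing clock rather than actual jumps. Drive both chains by one Poisson process $\Pi$ of rate $\lambda$, with $\lambda$ dominating the exit rates of both generators. At each ring, move according to $R=I+\mathbf{Q}/\lambda$ and $S=I+\tilde{\mathbf{Q}}/\lambda$ respectively. Because $R_{x,\cdot}=S_{x,\cdot}$, the contributions from $\{\Pi_t\le 1\}$ really are identical. Each remainder is then a nonnegative measure of mass $\mathbb{P}(\Pi_t\ge 2)=1-e^{-\lambda t}(1+\lambda t)\le(\lambda t)^2/2$, and your decomposition argument gives exactly $(\lambda t)^2$. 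This is the paper's proof, phrased algebraically there. It expands both exponentials in the uniformization series $\sum_n e^{-\lambda t}\frac{(\lambda t)^n}{n!}R^n$ (resp.\ $S^n$), cancels the $n=0,1$ terms of row $x$, and bounds $\|(R^n-S^n)_{x,\cdot}\|_1\le 2$ for $n\ge 2$.

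\textbf{Your worry about the exit rates of $\tilde{\mathbf{Q}}$ is well founded, and it is a defect of the statement itself, not just of your route.} Uniformization needs $\lambda$ to dominate the exit rates of both generators. The paper simply asserts that $\max_z\{-\mathbf{Q}_{z,z}\}$ also equals $\max_z\{-\mathbf{Q}_\theta(x)_{z,z}\}$, but the hypothesis constrains only row $x_\ell$ of each $Q_\theta(x)_\ell$.

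With $\lambda$ taken from $\mathbf{Q}$ alone, the bound is false. Take $L=1$, $\mathcal{A}=\{a,b\}$, $\mathbf{Q}=\bigl(\begin{smallmatrix}-1&1\\1&-1\end{smallmatrix}\bigr)$ (so $\lambda=1$), and $Q_\theta(a)_1=\bigl(\begin{smallmatrix}-1&1\\M&-M\end{smallmatrix}\bigr)$, which satisfies the hypothesis. Then $P(b\mid a,t)=\tfrac12(1-e^{-2t})$ and $p_\theta(b\mid a,t)=\tfrac{1}{1+M}(1-e^{-(1+M)t})$. The $\ell_1$ error is therefore $(M-1)t^2+O(t^3)$, which exceeds $t^2$ for $M>2$ and small $t$; for example, $M=9$, $t=0.1$ gives an error of about $0.055$ against a bound of $0.01$.

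This also shows that the sharper argument you hoped for cannot exist. The exit rate of $\tilde{\mathbf{Q}}$ at a neighbour $y$ of $x$ (differing at site $\ell$) contains $(Q_\theta(x)_\ell)_{y_\ell,y_\ell}$, which is precisely an unconstrained entry. Your fallback, taking $\lambda$ to be the largest exit rate over both $\mathbf{Q}$ and $\tilde{\mathbf{Q}}$, is the correct formulation, and it is what the paper's argument actually proves. Combined with the common-clock coupling above it gives the stated $(\lambda t)^2$, so the cruder series bound is unnecessary.
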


\begin{proof}
    Notice that we can reframe the transition probabilities from the per-site rate matrices using Kronecker products as follows
$$p_{\theta}(\cdot\mid x, t) = \left(e^{tQ_\theta(x)_1}\otimes\cdots\otimes e^{tQ_\theta(x)_L}\right)_{x,\cdot}$$
Using the identity $e^{tA}\otimes e^{tB} = e^{t(A\oplus B)}$ we can define a \textbf{Kronecker-sum generator matrix} $$\mathbf{Q}_\theta(x) := Q_\theta(x)_1 \oplus \cdots \oplus Q_\theta(x)_L$$
This makes a comparison between the full state space transition probabilities and per-site factorized transition probabilities more convenient
\begin{align}
P(\cdot\mid x, t) &= \left(e^{t\mathbf{Q}}\right)_{x,\cdot} \\
p_\theta(\cdot\mid x, t) &= \left(e^{t\mathbf{Q}_\theta (x)}\right)_{x,\cdot}
\end{align}

Now, we can uniformize both transition kernels by picking $\lambda = \max_z \{-\mathbf{Q}_{z,z}\}$ (which is also equal to $\max_z\{-\mathbf{Q}_\theta (x)_{z,z}\}$) and defining embedded DTMCs

\begin{align}
R &= I + \mathbf{Q}/\lambda \\
S &= I + \mathbf{Q}_\theta(x)/\lambda
\end{align}

Uniformization tells us
$$
e^{t\mathbf{Q}} = \sum_{n=0}^\infty e^{-\lambda t}\frac{(\lambda t)^n}{n!}R^n,\quad e^{t\mathbf{Q}_\theta(x)} = \sum_{n=0}^\infty e^{-\lambda t}\frac{(\lambda t)^n}{n!}S^n,
$$

Using the matching assumption for single-site mutants stated in the theorem, we know that 
$$\mathbf{Q}_{x, \cdot} = \mathbf{Q}_\theta(x)_{x, \cdot} \implies R_{x, \cdot} = S_{x, \cdot}$$
Therefore, the $n=0$ and $n=1$ terms in the uniformization series match exactly for row $x$, meaning the differences start only at $n \geq 2$
$$
\left(e^{t\mathbf{Q}} - e^{t\mathbf{Q}_\theta(x)}\right)_{x, \cdot} = \sum_{n=2}^\infty e^{-\lambda t}\frac{(\lambda t)^n}{n!}\left(R^n - S^n\right)_{x, \cdot}
$$ 
Because $(R^n)_{x,\cdot}$ and $(S^n)_{x,\cdot}$ are valid probability distributions, $\|\left(R^n - S^n\right)_{x,\cdot}\|_1 \leq 2$, allowing us to bound the $L_1$ error between the true transition probability vector and per-site factorized probability vector as 
$$
\|P(\cdot\mid x, t) - p_\theta(\cdot\mid x, t)\|_1 = \Big\|\left(e^{t\mathbf{Q}} - e^{t\mathbf{Q}_\theta(x)}\right)_{x, \cdot}\Big\|_1 \leq 2\sum_{n=2}^\infty e^{-\lambda t}\frac{(\lambda t)^n}{n!} = 2\left(1 - e^{-\lambda t}(1 + \lambda t)\right)
$$
Recognizing that the series $1-e^{-u}(1+u)=u^2/2-u^3/6+\cdots \leq u^2/2$ for $u\geq0$ allows us to simplify the bound to\footnote{The constant $\lambda$ corresponds to the maximum total substitution rate out of any sequence state. Under standard sequential point-mutation models, this rate scales at most linearly with sequence length and is bounded by the sum of per-site mutation rates.}
$$
\|P(\cdot\mid x, t) - p_\theta(\cdot\mid x, t)\|_1 \leq (\lambda t)^2 = O(t^2)
$$

\end{proof}

\subsection{Exactness of Gillespie Sampling}\label{apx:lemma1-proof}

\begin{lemma}{\textbf{(Proof of \cref{sec:lemma1-main-text})}}
    Let $x_0,\dots,x_{t_{N-1}},x_{t_{N}}$ be the trajectory of sequences sampled from the Gillespie procedure in \cref{algo:gillespie}, using branch length $t$ and starting sequence $x_0$. For all $x\in \{ x_{0},\dots,x_{N-1} \}$, assuming that 
    $$(Q_{\theta}(x)_\ell)_{x_\ell,y_\ell}=\mathbf{Q}_{x,y}$$ 
    holds for all sequences $y$ with Hamming distance $d(x,y)=1$, then $x_{t_N}\sim P(\cdot\mid x_0,t)$
\end{lemma}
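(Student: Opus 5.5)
The plan is to show that \cref{algo:gillespie} run with $Q_\theta$ generates exactly the same law on trajectories as the standard Gillespie (stochastic simulation) algorithm run with the full generator $\mathbf{Q}$. The classical result is that the latter produces a path of the CTMC with generator $\mathbf{Q}$ on $[0,t]$, so its state at time $t$ is distributed as $(e^{t\mathbf{Q}})_{x_0,\cdot}=P(\cdot\mid x_0,t)$. The proof therefore reduces to a step-by-step matching of the two algorithms' transition mechanisms at every state the trajectory visits before time $t$.

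\textbf{Step 1: matching the exit rates.} Fix a visited state $x\in\{x_0,\dots,x_{N-1}\}$. The off-diagonal entries $(Q_\theta(x)_\ell)_{x_\ell,a}$ with $a\neq x_\ell$ are in bijection with the neighbours $y$ of $x$ at Hamming distance $1$, via $y_\ell=a$ and $y_{\ell'}=x_{\ell'}$ for $\ell'\neq\ell$. By the hypothesis each such entry equals $\mathbf{Q}_{x,y}$. The diagonal of each $Q_\theta(x)_\ell$ is minus its row sum, so
\[
\lambda_x=-\sum_{\ell}(Q_\theta(x)_\ell)_{x_\ell,x_\ell}=\sum_{\ell}\sum_{a\neq x_\ell}(Q_\theta(x)_\ell)_{x_\ell,a}=\sum_{y:\,d(x,y)=1}\mathbf{Q}_{x,y}=-\mathbf{Q}_{x,x}.
\]
The last equality holds because $\mathbf{Q}$ is a sequential point mutation generator, whose off-diagonal support is exactly the Hamming-$1$ neighbours.

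\textbf{Step 2: matching the jump distribution.} For the same $x$, the algorithm picks $(\ell^*,a^*)$ with probability $(Q_\theta(x)_\ell)_{x_\ell,a}/\lambda_x$. Under the bijection of Step 1 this equals $\mathbf{Q}_{x,y}/(-\mathbf{Q}_{x,x})$, which is precisely the embedded jump chain kernel of $\mathbf{Q}$. The holding time is drawn as $\mathrm{Exp}(\lambda_x)=\mathrm{Exp}(-\mathbf{Q}_{x,x})$, independently of the jump choice. This is exactly the holding-time/jump-chain construction of the CTMC with generator $\mathbf{Q}$. If $\lambda_x=0$, the state is absorbing in both processes and both return $x$.

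\textbf{Step 3: assembling by induction.} I then induct on the number of jumps $n$. The claim is that the joint law of $(x_0,\tau_1,x_1,\dots,\tau_n,x_n)$ produced by the algorithm coincides with that of the first $n$ holding times and states of the $\mathbf{Q}$-chain, restricted to the event that the cumulative time stays below $t$. The strong Markov property of the $\mathbf{Q}$-chain at jump times, together with the fact that the algorithm uses fresh randomness at each step, supplies the inductive step. The stopping rule (return the current state when $t'+\tau>t$) matches reading off $X_t$. Summing over $n$ then gives $x_{t_N}\sim(e^{t\mathbf{Q}})_{x_0,\cdot}$.

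\textbf{Main obstacle.} The hypothesis only holds at the states actually visited, and these are random, so it is not immediately clear that the matching argument is legitimate. I would handle this by reading the assumption as holding on every state reachable from $x_0$ with positive probability. A cleaner alternative is to define a modified $\tilde Q_\theta$ that agrees with $\mathbf{Q}$ on all states; the algorithm's output under $Q_\theta$ and under $\tilde Q_\theta$ agree pathwise whenever the visited states satisfy the hypothesis, so the problem reduces to the case where the assumption holds everywhere. Finiteness of the state space rules out explosion, so the number of jumps $N$ is almost surely finite and the induction terminates.
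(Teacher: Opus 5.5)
Your proposal is correct and follows the paper's route. Both arguments match the exit rate $\lambda_x=-\mathbf{Q}_{x,x}$ and the jump kernel $\mathbf{Q}_{x,y}/(-\mathbf{Q}_{x,x})$ at each visited state, then appeal to the holding-time/jump-chain characterization of a CTMC. Your version is more careful than the paper's on points it leaves implicit: the row-sum convention for the diagonal of $Q_\theta(x)_\ell$, the absorbing case, non-explosion, and the need to read the hypothesis as holding on all states reachable from $x_0$ (that is, almost surely along the path), since the visited set is random.
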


\begin{proof}
    The proof follows from the fact that a continuous-time Markov chain is uniquely characterized by its holding time distributions and jump chain probabilities. By construction, the algorithm computes the total exit rate $\lambda_x\leftarrow -\sum_{\ell=1}^L Q_\theta(x)_{x_\ell,x_\ell}$ to sample the holding time. Under the lemma's condition that $(Q_{\theta}(x)_\ell)_{x_\ell,y_\ell}=\mathbf{Q}_{x,y}$ for all single residue mutants $y$, this sum is exactly equal to the target exit rate $-\mathbf{Q}_{x,x}$. Subsequently, the algorithm selects the next state $y$, corresponding to mutation $(\ell^*, a^*$), with probability $P(\ell,a) = (Q_\theta(x)_\ell)_{x_\ell,a}/\lambda_{x}$. Due to the previous set of assumptions, this is identical to the transition probability $\mathbf{Q}_{x,y}/-\mathbf{Q}_{x,x}$ of the target process. Since both the exponential holding times and discrete transition probabilities match those defined by the generator $\mathbf{Q}$ at every step, the simulated trajectory is a statistically exact realization of the true process, ensuring $x_{t_N}$ is distributed according to $P(\cdot\mid x_0,t)$.
\end{proof}

\subsection{Relation Between Fixation Probability and Relative Fitness}
\label{sec:mut-sel}

Following \citet{kimura1962probability}, we define $P_{\text{fix}}$ for a haploid population (which we use to describe affinity maturation due to asexual reproduction of B cells and allelic exclusion, which ensures that only one allele is expressed per B cell) as 
\begin{align} 
    P_{\text{fix}}(x\to y) = \frac{1 - e^{-2s_{xy}}}{1- e^{-2N_e s_{xy}}},
\end{align} 
where $x$ is the wildtype allele, $y$ is a newly introduced allele,
$s_{xy} = F_y - F_x$ is the selective advantage of allele $y$ over allele $x$, and $N_e$ is the effective population size. Crucially, the fixation probability expression is monotonic with respect to $s_{xy}$, which explains why the selection score calculated by \cref{eq:sel_score} shows strong Spearman correlation with empirical relative fitness measurements.


\end{document}